\newcommand{\mle}{\mathrm{MLE}}
\newcommand{\KL}{\mathrm{KL}}
\newcommand{\SubOpt}{\mathrm{SubOpt}}
\title{\LARGE DPO Meets PPO: Reinforced Token Optimization for RLHF}
\author{Han Zhong\thanks{The first three authors contributed equally. Email to \texttt{\{hanzhong, shanzikang\}@stu.pku.edu.cn}} \thanks{Peking University} \qquad Zikang Shan$^\dagger$ \qquad Guhao Feng$^\dagger$ \qquad Wei Xiong\thanks{University of Illinois Urbana-Champaign} \qquad Xinle Cheng$^\dagger$ \\ Li Zhao\thanks{Microsoft Research Asia} \qquad Di He$^\dagger$ \qquad Jiang Bian$^\S$ \qquad Liwei Wang$^\dagger$}
\date{April 2024; Revised: May 2025}
\begin{document}
\maketitle

\begin{abstract}
    In the classical Reinforcement Learning from Human Feedback (RLHF) framework, Proximal Policy Optimization (PPO) is employed to learn from sparse, sentence-level rewards---a challenging scenario in traditional deep reinforcement learning. Despite the great successes of PPO in the alignment of large language models, its open-source implementation is still largely sub-optimal. To address these issues, we introduce a framework that models RLHF problems as a Markov decision process (MDP), enabling the capture of fine-grained token-wise information. 
    Under this framework, we introduce an algorithm Reinforced Token Optimization (\texttt{RTO}), which learns the token-wise reward function from preference data and performs policy optimization based on this learned token-wise reward signal.
    Theoretically, \texttt{RTO} is proven to have the capability of finding the near-optimal policy sample-efficiently. 
    For its practical implementation, \texttt{RTO} innovatively integrates Direct Preference Optimization (DPO) and PPO. DPO, originally derived from sparse sentence rewards, surprisingly provides us with a token-wise characterization of response quality, which is seamlessly incorporated into our subsequent PPO training stage. Extensive experiments demonstrate that \texttt{RTO} performs better than PPO and other direct preference learning algorithms. In particular, RTO outperforms PPO by 7.5 points on the AlpacaEval 2 benchmark and by 4.1 points on Arena-Hard. Our code and models are available at \href{https://github.com/zkshan2002/RTO}{https://github.com/zkshan2002/RTO}.
\end{abstract}

\section{Introduction}
Reinforcement Learning from Human Feedback (RLHF) has emerged as a key technique for aligning foundation models with human values and preferences \citep{christiano2017deep, ziegler2019fine}. It has been pivotal in enabling Large Language Models (LLMs) to produce more helpful, harmless, and honest responses \citep{bai2022training}, as demonstrated in significant applications such as ChatGPT \citep{OpenAI2023GPT4TR}, Claude \citep{Anthropic@claude}, and Gemini \citep{team2023gemini}. 
The classical RLHF pipeline \citep{ziegler2019fine,ouyang2022training} consists of two steps: (i) Reward training from human feedback, where the learner learns the reward function based on preference data, typically through Maximum Likelihood Estimation (MLE). (ii) Reward-based RL training, where the learner employs the seminal deep RL algorithm Proximal Policy Optimization \citep[PPO;][]{schulman2017proximal} to optimize the reward learned in the previous step.

Despite the success of this framework in the aforementioned powerful closed-source LLMs, the training of PPO is known to be unstable and sample-inefficient \citep{choshen2019weaknesses}. 
While researchers have made efforts to propose alternative approaches to the PPO algorithm, with notable examples like rejection sampling fine-tuning \citep{dong2023raft, gulcehre2023reinforced}, direct preference learning algorithms \citep{rafailov2023direct,zhao2023slic,azar2023general}, there is little evidence that these newly proposed approaches alone can make the state-of-the-art LLMs. Therefore, improving the performance of the PPO algorithm in the context of RLHF is still an important research direction that is largely under-explored.

After examining the open-source implementation of PPO, we identify that one potential reason for the sub-optimal performance of PPO is the mismatch between the formulation of RLHF and the nature of PPO. Specifically, in the existing framework \citep{ouyang2022training, bai2022training}, RLHF is formulated as a \textit{bandit}, where the entire response sentence is considered to be an action, and the reward is sentence-level, evaluating only the overall quality of the response. However, PPO is designed for multi-step RL problems modeled as \textit{Markov decision processes} (MDPs), requiring a token-wise reward assignment to each step. In typical implementations of PPO (e.g., \href{https://github.com/OpenRLHF/OpenRLHF}{OpenRLHF} and \href{https://github.com/huggingface/trl}{TRL}), besides the regularization reward function assigned to each token to ensure the fine-tuned LLM stays close to the supervised fine-tuning (SFT) model, the learned sentence-level reward is only distributed to the last token, while other tokens receive zero learned reward. See \eqref{eq:ppo:reward} for the formal mathematical description. Clearly, there is a separation in terms of the assignment strategies of the regularization reward and the learned reward. Meanwhile, while it is generally believed that a fine-grained characterization with token-wise feedback can provide more information, in practice, it is also challenging to collect effective token-wise feedback for human conversations and use it in the MLE process. Consequently, the construction of token-wise reward signals also remains largely under-explored in the literature of RLHF.

\begin{figure}
    \centering
    \includegraphics[width=\textwidth]{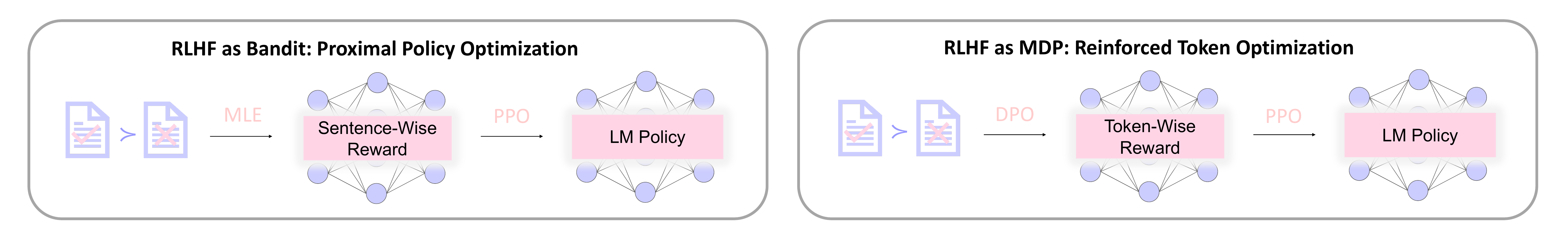}
    \caption{In the MDP framework of RLHF, \texttt{RTO} uses DPO to derive a token-level reward function and then applies PPO to enhance it. This approach is significantly different from the traditional RLHF process, which employs PPO to improve sentence-level rewards under the bandit framework of RLHF.}
    \label{fig:rto}
\end{figure}

\subsection{Our Contributions}

In this work, we aim to address the aforementioned issues by developing an RLHF framework with a fine-grained token-wise reward characterization, establishing the mathematical foundation, and advancing practical algorithmic designs. The key contributions of this work are summarized as follows.
\begin{itemize}
    \item We propose a framework that models RLHF as an MDP, offering a more precise token-wise characterization of the LLM's generation process. Furthermore, we provide theoretical insights into why the token-wise MDP formulation is superior to the previous sentence-level bandit formulation of RLHF.
    \item Under the MDP formulation of RLHF, we introduce Reinforced Token Optimization (\texttt{RTO}), which extracts token-wise reward signals from offline preference data and subsequently performs RL training with respect to the learned token-wise rewards. Using MLE as the token-wise reward learning oracle, we prove that \texttt{RTO} can learn a near-optimal policy in a sample-efficient manner.
    \item Moving toward the practical implementation of \texttt{RTO}, we adopt a novel token-wise reward extraction approach from direct preference optimization \citep[DPO;][]{rafailov2023direct}. By assigning this DPO-based token-wise reward function to each token and then optimizing with PPO. \texttt{RTO} demonstrates superior performance compared to both PPO and direct preference learning baselines such as DPO \citep{rafailov2023direct}, R-DPO \citep{park2024disentangling}, and SimPO \citep{meng2024simpo}. In particular, RTO achieves a 7.5-point improvement on the AlpacaEval 2 benchmark and a 4.1-point improvement on Arena-Hard. Additionally, \texttt{RTO} exhibits strong data scaling properties compared to PPO --- (i) reaching PPO-level performance with only $1/8$ of the data and (ii) continuing to improve as more data is added, whereas PPO saturates early.
\end{itemize}

In summary, under the MDP formulation of RLHF, we develop a new principled RLHF algorithm, \texttt{RTO}, that leverages token-wise reward signals derived from offline preference data using DPO, and subsequently performs PPO training to optimize the token-wise rewards. The pipeline of \texttt{RTO} is visualized in Figure~\ref{fig:rto}.

\subsection{Related Works}
We review the works that are mostly related to our project in this subsection. Due to the space constraint, we refer interested readers to the survey \citep{casper2023open} for a more comprehensive overview of RLHF. 

\paragraph{RLHF algorithm.} The classic RLHF framework is established in \citet{christiano2017deep, ziegler2019fine} and further developed in \citet{ouyang2022training, bai2022training}, where the latter can be viewed as the results of the preliminary versions of Chat-GPT and Claude. PPO \citep{schulman2017proximal} is the default choice for all these projects and its effectiveness has been showcased in the resulting revolutionary foundation language models. However, as we mentioned in the introduction, tuning the PPO algorithm to its best performance requires extensive efforts and resources are often unavailable to the open-source community. Motivated by this, researchers have made efforts to develop alternative approaches to the PPO algorithm. As a direct extension of the best-of-n inference \citep{nakano2021webgpt}, rejection sampling fine-tuning is proposed by \citet{dong2023raft, gulcehre2023reinforced, wang2024arithmetic}, which prompts the LLM to generate $n$ responses per prompt and uses a learned reward function to rank the responses and fine-tune the model on those with high rewards. Besides, inspired by the reward-conditioned training in RL literature \citep{chen2021decision}, \citet{hu2023aligning,yang2024rewards} develop conditional SFT to avoid the reward learning. Another line of work aims to skip the reward modeling step and may be referred to as the direct preference learning approach \citep{zhao2023slic, rafailov2023direct, azar2023general, tang2024generalized}. Among them, the direct preference optimization (DPO) algorithm is the most popular one, mostly due to its innovative idea: \textit{your language model is secretly a reward model}. In particular, according to the reward benchmark \citep{lambert2024rewardbench}, the DPO-aligned algorithm often admits a competing ranking accuracy as a reward function. We will formally discuss the principle of DPO in Appendix~\ref{sec:dpo}, which also partly motivates our methods. After these, there are also many tasks that consider the variants of this direct preference learning approach by increasing the training steps \citep{xiong2023gibbs, snorkelai@pair} and consider the more general preference signal sources \citep{ye2024theoretical, rosset2024direct}. Although all these recently proposed algorithms achieve promising results, there is little evidence that these algorithms alone without PPO can make state-of-the-art LLMs. Therefore, understanding PPO and improving its performance in the context of foundation model alignment is still an important research direction.

\paragraph{Theoretical study of RLHF.} The theoretical study of RLHF may date back to the dueling bandit and dueling RL \citep[e.g.,][]{yue2012k,saha2021optimal,faury2020improved,bengs2021preference,pacchiano2021dueling, chen2022human, zhu2023principled, wang2023rlhf, zhan2023provable, zhan2023query}, where the reward maximization problem is considered in the face of preference signals, instead of the absolute reward signals. However, the reward maximization framework admits a greedy and deterministic optimal policy, which deviates from the principle of generative AI. Meanwhile, instead of the original reward function, the most widely used learning target is a Kullback-Leibler (KL)-regularized one. In recognition of the above issues, \citet{xiong2023gibbs} first formally formulates the RLHF as the reverse-KL constrained contextual bandit in offline, online, and hybrid settings, and proposes sample-efficient algorithms in different settings accordingly. Beyond the reward-based framework under the Bradley-Terry model, \citet{azar2023general, ye2024theoretical} consider the RLHF under a general preference oracle, and motivate the algorithmic design in a KL-regularized minimax game between two LLMs. In particular, \citet{azar2023general} proposes the first sample-efficient planning algorithm, and \citet{ye2024theoretical} designs the sample-efficient learning algorithms in offline and online settings. Notably, as these studies of the KL-regularized framework align with the practical applications closely, the theoretical insights naturally motivate practically powerful algorithms like GSHF \citep{xiong2023gibbs}, Nash-MD \citep{azar2023general}, and DNO \citep{rosset2024direct}. However, we remark that \citet{xiong2023gibbs, azar2023general, ye2024theoretical} are still confined to the bandit setting, thus differing from the MDP formulation presented in this paper. 

\paragraph{Improving PPO in the context of RLHF.} Although some works \citep[e.g.,][]{uesato2022solving,lightman2023let,yang2024preference} use token-wise or step-wise information to enhance the performance of LLMs, such as their reasoning ability, we will not discuss them in detail here. Instead, we will focus on comparing our work with others that aim to improve the PPO in RLHF. In particular, \citet{li2023remax} and \citet{ahmadian2024back} state that the PPO is not the best fit for RLHF because of the sentence-level reward and deterministic transition, and argue that the reinforce-style \citep{williams1992simple} algorithms perform better. \citet{wu2024fine} proposes to construct several separate reward functions for different goals and use the linear combination of them to guide the PPO training, but the separate models are still confined to the sentence level. Similarly, \citet{jang2023personalized} extends the PPO to the multi-objective optimization scenario, but still uses the sentence-level modeling. \citet{chan2024dense} shares similar insights that aim to improve PPO via a dense reward. They still follow the two-staged RLHF framework to model the reward function via MLE of the Bradley-Terry model and assume that the learned reward is based on the transformer \citep{vaswani2017attention}. Then, they propose to use the attention value to redistribute the final scalar reward on a token level. In comparison, while sharing similar insights about using a token-wise reward, our techniques to obtain the dense signal and mathematical motivation are fundamentally different. 

\paragraph{Concurrent and Subsequent work.} We notice several concurrent and independent works by \citet{rafailov2023direct,zeng2024token,meng2024simpo}. \citet{rafailov2024r} also provides a token-wise MDP formulation for RLHF. Their work shares the same insight as ours, namely that ``DPO implicitly optimizes the token-wise reward''. Based on this insight, they improve the efficiency of search-based algorithms. In contrast, we propose a new algorithm $\texttt{RTO}$ that leverages the token-wise reward functions to enhance the performance of PPO. In addition, our work provides a theoretical foundation for the unique advantages of token-wise MDP and its sample-efficient learning. Meanwhile, \citet{zeng2024token} and \citet{meng2024simpo} introduce two direct preference learning algorithms (token-wise DPO and SimPO). Unlike these approaches, our focus is on improving PPO-based RL training by leveraging token-wise rewards. We include these two algorithms as baselines to demonstrate the superior performance of \texttt{RTO}. Finally, following our work, \citet{cui2024process,yin2025segmenting} utilizes implicit rewards in RL training to enhance chat and reasoning capabilities, highlighting the broad applicability of our method. 

\subsection{Notation}
Given a set $\cX$, we denote the collection of distributions over $\cX$ by $\Delta(\cX)$. We use $\mathbbm{1}\{\cdot\}$ to denote the indicator function. For any positive integer $h$, we use the notation $y_{1:h}$ to denote the sequence $\{y_1, y_2, \dots, y_h\}$. For any two distributions $P, Q \in \Delta(\cX)$, we define the KL divergence as
\$
\mathrm{KL}(P \| Q) = \sum_{x \in \cX} P(x) \log \Big( \frac{P(x)}{Q(x)} \Big).
\$

\section{Preliminaries}

In this section, we introduce the standard RLHF paradigm. Let $x \in \cX$ denote the prompt sampled from a distribution $\rho \in \Delta(\cX)$, and $y= (y_1,y_2,\dots,y_h,\dots)$ be the corresponding response, which is a sequence of tokens generated by LLMs, where $y_i$ represents the $i$-th token. In practice, it is widely assumed \citep{christiano2017deep, ziegler2019fine, bai2022training, ouyang2022training, touvron2023llama} that the preference signal is generated according to the Bradley-Terry (BT) model \citep{bradley1952rank}:
\begin{equation} \label{eqn:bt}
\begin{aligned}
        \PP(y^1 \succ y^2|x,y^1,y^2)=  \frac{\exp(r(x,y^1))}{\exp(r(x,y^1)) + \exp(r(x,y^2))} = \sigma\big( r(x, y^1) - r(x, y^2) \big),
 \end{aligned}
\end{equation}
where $\sigma(z) = 1/(1+\exp(-z))$ is the sigmoid function, and $r$ is a ground-truth reward function defined at the \textbf{sentence level}. In other words, the reward function $r$ only evaluates the overall performance of the entire response. The classical RLHF pipeline \citep{ziegler2019fine,ouyang2022training} typically consists of two steps: reward training from human feedback and reward-based RL training. In the first step, the learner is given a dataset $\cD = \{(x, y^w, y^l)\}$, where $y^w$ denotes the preferred response over the $y^l$. The reward function is learned through Maximal Likelihood Estimation (MLE) on this dataset $\cD$:
\begin{equation}
\label{eqn:mle_old}    
r_{\mle} = \argmax_{r} \EE_{(x, y^w, y^l) \sim \cD} \big[\log\big(\sigma(r(x, y^w) - r(x, y^l))\big)\big].
\end{equation}
In the second step, the learned reward $r_{\mle}$ from the previous step is optimized while ensuring that the updated language model (LLM) does not deviate significantly from the reference model $\pi_{\mathrm{ref}}$, usually selected as a supervised fine-tuned (SFT) LLM. This is because reward optimization along usually leads to reward hacking \citep{casper2023open}, meaning that the LLM will utilize the imperfection of the reward model and chase for a high reward but with a poor performance at the same time. Formally, the LLM is optimized with respect to the learned reward $r_{\mle}$ with a KL-regularized term:
\$
\hat{\pi} = \argmax_{\pi} \EE_{x \sim \rho, y \sim \pi(\cdot \mid x)} \bigg[ r_{\mle}(x, y) - \beta \log \frac{\pi(y \given x)}{\pi_{\mathrm{ref}}(y \given x)} \bigg],
\$
where $\beta > 0$ is an appropriate KL penalty coefficient. This KL-regularized target is widely adopted in practice \citep{christiano2017deep, ziegler2019fine, ouyang2022training, bai2022training, rafailov2023direct} to balance reward optimization and the goal of staying close to the reference policy. Another primary technical reason is that this regularization ensures that the framework admits a stochastic optimal policy, as compared to the deterministic greedy reward maximizer. The policy optimization step is typically achieved by PPO \citep{schulman2017proximal}, a seminal deep RL algorithm for solving multi-step decision-making problems and its implementation requires a reward signal at each step (corresponding to each token in the context of LLMs). To this end, given a prompt $x$ and a response $y = y_{1:H}$ containing $H$ tokens, existing open-source implementations of PPO assign the sentence-level reward $r_{\mle}(x, y)$ to the last token and optimize the following reward:
\begin{equation}
    \begin{aligned} \label{eq:ppo:reward}
{r}_{\mathrm{ppo}}(x, y_{1:h}) = \begin{cases}
{\color{redd}{0}} - \beta \log \frac{\pi(y_h \given x, y_{1:h-1})}{\pi_{\mathrm{ref}}(y_{h} \given x, y_{1:h-1})} & \text{ if } h \le H-1, \\
 {\color{redd}r_{\mle}(x, y)} - \beta \log \frac{\pi(y_h \given x, y_{1:h-1})}{\pi_{\mathrm{ref}}(y_{h} \given x, y_{1:h-1})} & \text{ if } h = H,
\end{cases}
    \end{aligned}
\end{equation}
where $\pi$ is the current policy to be improved. However, it is well known that sparse rewards can make learning more difficult compared to dense rewards \citep{andrychowicz2017hindsight}. One natural solution is to design dense token-wise rewards used for PPO training, but this is beyond the scope of the current bandit formulation for RLHF and motivates us to provide a framework with more fine-grained token-wise characterization that enables the use of token-wise rewards.

\section{Formulation for RLHF: From Bandit to MDP} \label{sec:formulation}

In this section, we introduce our MDP formulation for RLHF. Section~\ref{sec:mdp:formulation} describes how to characterize RLHF using token-wise MDPs in the context of LLMs. Section \ref{sec:objective}, we provide the learning objective under this framework. Lastly, Section~\ref{sec:token:reward} demonstrates the advantages of the token-wise MDP formulation compared to the sentence-wise bandit formulation.

\subsection{MDP Formulation for RLHF} \label{sec:mdp:formulation}
We model the RLHF problem as a Markov decision process (MDP), which is denoted as a tuple $\cM = (\cS, \cA, \cP, r, \rho, H)$. Here $\cS$ is the state space, $\cA$ is the action space, $\cP: \cS \times \cA \rightarrow \Delta(\cS)$ is the transition kernel, $r$ denotes the reward function, $\rho$ signifies the initial state distribution and $H$ is the maximal number of interaction steps. A (Markov) policy in MDPs $\pi: \cS \rightarrow \Delta(\cA)$ is a mapping from state to a distribution over actions. The interaction between the environment $\cM$ and the agent can be described as follows. Initially, the starting state $s_1$ is sampled from the initial distribution $\rho$. At the $h$-th step, the agent observes the state $s_h$ and selects an action $a_h$ based on its policy. The environment then transits to the next state $s_{h+1}$, which is sampled from the distribution $\cP(\cdot \given s_h, a_h)$. This interaction continues until a certain ending condition is satisfied, which will be triggered within $H$ steps.

In the standard text generation process of large language models (LLMs), each state $s_h = (x, y_{1:h-1})$ includes the prompt $x$ and all response tokens produced up to that point. Each action $a_h = y_h$ represents a token from the vocabulary. The transition kernel $\cP$ is usually known and deterministic, meaning that given tokens $s_h = (x, y_{1:h-1})$ and $a_h = y_{h}$, the environment will transition to $s_{h+1} = (x, y_{1:h})$. The policy $\pi$ maps all the observed tokens so far to a distribution over the vocabulary. It is important to note that the policy captures the autoregressive nature of LLMs, i.e.,
$\pi(y_{1:h} \given x) = \prod_{i = 1}^h \pi(y_i \given x, y_{1:h-1})$ for any $h$.
Due to this, we may refer to it as an autoregressive policy to differentiate it from policies defined in other ways. Moreover, $r: \cS \times \cA \rightarrow \RR$ represents the token-wise reward. The maximum number of tokens that can be generated, $H$, characterizes the length limit for LLM outputs. Each generated text ends with a special end-of-sentence token \texttt{EoS}, which terminates the generation process.

In our MDP formulation for RLHF, we also model the preference signal using BT model \citep{bradley1952rank}, but replace the sentence-level reward function in \eqref{eqn:bt} with token-wise reward functions. In specific, for any trajectory pair $\tau^1 = \{(s_h^1, a_h^1)\}_{h=1}^H$ and $\tau^2 = \{(s_h^2, a_h^2)\}_{h=1}^H$\footnote{In fact, these two trajectories can have different lengths, say $\tau^1 = \{(s_h^1, a_h^1)\}_{h=1}^{H_1}$ and $\tau^2 = \{(s_h^2, a_h^2)\}_{h=1}^{H_2}$ with $1 \le H_1, H_2 \le H$. These trajectories can be extended to length $H$ by assuming that the state ending with $\texttt{EoS}$ is absorbing and yields zero reward. This modification is to simplify the mathematical formulation and does not affect the problem modeling in \eqref{eq:BT:mdp}. For the sake of clarity, the following theoretical discussion may focus on length-$H$ trajectories.}, the preference is specified by
    \# \label{eq:BT:mdp}
    \PP(\tau^1 \succ \tau^2) = \frac{\exp(\sum_{h=1}^H r(s_h^1, a_h^1))}{\exp(\sum_{h=1}^H r(s_h^1, a_h^1)) + \exp(\sum_{h=1}^H r(s_h^2, a_h^2))} =\sigma\bigg( \sum_{h=1}^H r(s_h^1, a_h^1) - \sum_{h=1}^H r(s_h^2, a_h^2) \bigg).
    \# 
Compared to literature that formulates the RLHF problem as a contextual dueling bandit, a subtle difference is that the policy in the contextual dueling bandit maps a prompt to a distribution over sentences, which does not capture the autoregressive nature of LLMs. In contrast, our MDP formulation precisely captures this nature. We defer the discussion of these two types of policies in Section~\ref{sec:aut:policy}. More importantly, the main difference is that the reward function in the MDP formulation is defined on a token level, which contrasts significantly with the sentence-level reward in the contextual dueling bandit. We discuss the advantages of token-level rewards in Section~\ref{sec:token:reward}.

\subsection{Learning Objective} \label{sec:objective}

Different from classical RL literature, where the sole goal is to maximize the reward function, the objective of RLHF is to maximize the reward function while ensuring that the learned policy does not deviate too much from the reference model (e.g., SFT model) too much. Inspired by this and the formulation of entropy-regularized MDPs \citep{williams1991function,ziebart2010modeling}, for any policy $\pi$, we define its corresponding regularized value-function by
\begin{equation} 
\begin{aligned} \label{eq:q:v}
V_\beta^{\pi}(s; r) & = \EE_{\pi} \bigg[ \sum_{h = 1}^\infty  \bigg( r(s_h, a_h) - \beta \cdot \log \frac{\pi(a_h \given s_h)}{\pi_{\mathrm{ref}}(a_h \given s_h)} \bigg) \bigg| s_1 = s\bigg],
\end{aligned}
\end{equation}
where the expectation $\EE_{\pi}$ is taken with respect to the randomness incurred by the policy $\pi$. Here the summation ends when a certain condition is met. In particular, since we assume that the maximal length of the generated responses of LLMs is at most $H$, the summation in \eqref{eq:q:v} is taken at most $H$ steps. In the remaining part of this paper, we may use $\sum_{h=1}^\infty$ and $\sum_{h=1}^H$ interchangeably, as they mostly have the same meaning. 
The regularized Q-function $Q_\beta^\pi$ of a policy $\pi$ is related to the regularized value function $V_\beta^\pi$ as 
\# \label{eq:bellman}
Q_\beta^{\pi}(s, a; r) = r_\beta(s, a) + \EE_{s' \sim \cP(\cdot \given s, a)}[V_\beta^\pi(s'; r)], \qquad V_\beta^\pi(s; r) = \EE_{a \sim \pi(\cdot \given s)} [ - \beta \log \pi(a \given s) + Q_\beta^\pi(s, a; r)],
\#
where we denote $r_\beta(s, a) = r(s, a) + \beta \log \pi_{\mathrm{ref}}(a \given s)$. Moreover, when it is clear from the context, we may omit the dependency of the ground-truth reward function $r$ in $Q_\beta^\pi(s, a; r), V_\beta^\pi(s; r)$ and use the shorthand $Q_\beta^\pi(s, a), V_\beta^\pi(s)$. 
The regularized optimal policy $\pi_\beta^*$ is the policy that maximizes the regularized value function defined in \eqref{eq:q:v}, and its corresponding optimal Q-function and value function are denoted as $Q_\beta^*$ and $V_\beta^*$, respectively. By \eqref{eq:bellman}, it can be shown that
\# \label{eq:optimal:policy}
\pi_\beta^*(a \given s) = \exp\{ (Q_\beta^*(s, a) - V_\beta^*(s))/\beta \}.
\#
Our learning objective is to find a near-optimal policy $\hat{\pi}$, and its optimality gap is measured by the following suboptimality gap:
\# \label{eq:def:subopt}
\SubOpt(\hat{\pi}) = \EE_{s \sim \rho} [V_{\beta}^*(s) - V_{\beta}^{\hat{\pi}}(s)] = V_\beta^*(\rho) - V_\beta^{\hat{\pi}}(\rho),
\#
where we use the shorthand $V_\beta^\pi(\rho) = \EE_{s \sim \rho}[V_\beta^\pi(s)]$ for any policy $\pi$. For ease of presentation, we define the state visitation measure 
    $d^\pi(s) = \EE_{s_1 \sim \rho} [ \sum_{h = 1}^\infty \PP(s_t = s \given s_1 )]$ and the state-action visitation measure $d^\pi(s, a) = \EE_{s_1 \sim \rho} [ \sum_{h = 1}^\infty \PP(s_h = s, a_h = a \given s_1 ) ]$. We also use the shorthand $d^* = d^{\pi_\beta^*}$ to further simplify the notation.

\subsection{Advantages of Token-Wise MDP over Sentence-Wise Bandit} \label{sec:token:reward}

Intuitively, the distinction between token-based and trajectory-based rewards reflects the difference between sparse and dense reward settings. In the sparse reward scenario, exploration proves to be more challenging. To illustrate this, we focus on the deterministic MDP with an action set size of $A = |\cA|$. We employ an autoregressive policy $\pi^*$ to represent the policy of a powerful LLM, such as GPT-4. Fixing a prompt $x$, given responses $(y^1 = y_{1:H}^1, y^2 = y_{1:H}^2)$, the evaluation provided by $\pi^*$ is 
\$
\PP(y^1 \succ y^2 \given x, y_1, y_2) = \frac{\pi^*(y^1 \given x)}{\pi^*(y^1 \given x) + \pi^*(y^2 \given x)} .
\$
By comparing this with the BT models of bandit in \eqref{eqn:bt} and of our MDP formulation in \eqref{eq:BT:mdp}, we observe that the sentence-wise reward $r_s$ and token-wise as $r_t$ can be specified by
\# \label{eq:reward:example}
r_{s}(x, y) = \log \pi^*(y \given x), \qquad  r_{t}((x, y_{1:h-1}), y_h) = \log \pi^*(y_h \given x, y_{1:h-1}).
\# 
Intuitively, the responses that powerful LLMs tend to choose have higher rewards. In addition, it is straightforward to show that $r_s(x, y) = \sum_{h = 1}^H r_t( (x, y_{1:h-1}), y_h)$.
We also make the following natural assumption.
\begin{assumption} \label{assumption:dominant:response}
    There exists a response $y = y_{1:H}$ satisfying $\pi^*(y \given x) \ge A^{-\xi}$.
\end{assumption}

By the pigeon-hole principle, there must be a response $y$ such that $\pi^*(y \given x) \ge A^{-H}$, implying that $\xi \le H$. In practice, $\xi$ is usually much smaller than $H$ because the language model tends to choose the optimal response rather than making a random guess.
Now, we define the interaction protocol and the sample complexity. The learner can determine a response $y = y_{1:H}$ and receive either $r_s(x, y)$ or $\{r_t( (x, y_{1:h-1}), y_h)\}_{h=1}^H$, depending on whether the sentence-level reward or the token-wise reward is used. The sample complexity is defined as the number of responses and corresponding reward signals that need to be gathered to find the optimal response $y^* = y_{1:H}^*$ with length $H$.

\begin{figure}[t]
    \centering
\begin{tikzpicture}[level distance=1.5cm,
level 1/.style={sibling distance=3.6cm},
level 2/.style={sibling distance=1.8cm},
level 3/.style={sibling distance=0.9cm},
every node/.style={circle,draw,minimum size=0.8cm,inner sep=0pt}]

\begin{scope}[xshift=0cm]
\node[fill=blue!20] (root) {1}
child {node[fill=red!20] {1/8}
child {node[fill=blue!20] {*}
child {node[fill=blue!20] {*}}
child {node[fill=blue!20] {*}}
}
child {node[fill=blue!20] {*}
child {node[fill=blue!20] {*}}
child {node[fill=blue!20] {*}}
}
}
child {node[fill=blue!20] {7/8}
child {node[fill=blue!20] {3/4}
child {node[fill=blue!40] {1/2}}
child {node[fill=red!20] {1/4}}
}
child {node[fill=red!20] {1/8}
child {node[fill=blue!20] {$\dagger$}}
child {node[fill=blue!20] {$\dagger$}}
}
};
\end{scope}

\begin{scope}[xshift=6cm]
\node[fill=blue!20] (root) {1}
child {node[fill=blue!20] {7/8}
child {node[fill=blue!20] {3/4}
child {node[fill=blue!40] {1/2}}
child {node[fill=red!20] {1/4}}
}
child {node[fill=red!20] {1/8}
child {node[fill=blue!20] {$\dagger$}}
child {node[fill=blue!20] {$\dagger$}}
}
};
\end{scope}

\begin{scope}[xshift=9.5cm]
\node[fill=blue!20] (root) {1}
child {node[fill=blue!20] {7/8}
child {node[fill=blue!20] {3/4}
child {node[fill=blue!40] {1/2}}
child {node[fill=red!20] {1/4}}
}
};
\end{scope}
\begin{scope}[xshift=11.5cm]
\node[fill=blue!20] (root) {1}
child {node[fill=blue!20] {7/8}
child {node[fill=blue!20] {3/4}
child {node[fill=blue!40] {1/2}}
}
};
\end{scope}
\end{tikzpicture}
    \caption{An illustration of our efficient learning algorithm for the token-wise reward setting with $A = 2$, $H = 3$, and $\xi = 1$. Here $*$ and $\dagger$ represent real numbers between 0 and 1/8. We do not specify their exact values as they do not influence the optimal path. All nodes in $\cN$ are colored red, while other nodes are blue, with the optimal leaf node $1/2$ emphasized in dark blue. Each node $y_{1:h}$ is labelled with $\pi^*(y_{1:h} \given x)$. If a non-optimal path (response) is selected, one red node in $\cN$ will be identified, and all paths containing this node will be deleted. Here we visualize the process of choosing a path ending with $*$, $\dagger$, and $1/4$, respectively. At most $A^{\min\{\xi+1, H\}} = 4$ samples are needed to identify the optimal response.}
    \label{fig:process}
\end{figure}

\begin{proposition} \label{prop:token:reward}
    Suppose Assumption~\ref{assumption:dominant:response} holds. 
     In the setting where only the sentence-wise reward $r_s$ in~\eqref{eq:reward:example} is accessible, finding the optimal response $y^*$ requires a sample complexity of $A^H$. However, if token-reward signals $r_t$ in \eqref{eq:reward:example} are available, there exists an algorithm that can find the optimal policy with sample complexity $A^{\min\{\xi+1, H\}}$.
\end{proposition}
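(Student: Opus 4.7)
The approach is to prove the two halves of the proposition separately: a worst-case lower bound of $A^{H}$ queries in the sentence-wise case, and an upper bound of $A^{\min\{\xi+1,H\}}$ queries via an explicit algorithm in the token-wise case.

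For the sentence-wise lower bound, my plan is an adversarial indistinguishability argument. I will exhibit a family of LLM policies parameterized by the identity of the optimal leaf $y^{\star}$, ranging over all $A^{H}$ length-$H$ responses, where each member places probability exactly $A^{-\xi}$ on its $y^{\star}$ and spreads the remainder uniformly on the other leaves, so Assumption~\ref{assumption:dominant:response} holds with the stated $\xi$ for every member. Because the sentence-wise reward $r_{s}(x,y)=\log\pi^{*}(y\mid x)$ in~\eqref{eq:reward:example} takes only two distinct values across this family, a single query eliminates at most one candidate for $y^{\star}$; an adversary that places $y^{\star}$ at the last position the algorithm would query therefore forces any algorithm to issue on the order of $A^{H}$ queries in the worst case.

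For the token-wise upper bound, my plan is to formalize the algorithm visualized in Figure~\ref{fig:process}. Given a query path $y_{1:H}$, the algorithm observes $\log\pi^{*}(y_{h}\mid x,y_{1:h-1})$ for every $h$, and hence the cumulative probabilities $\pi^{*}(y_{1:h}\mid x)=\prod_{i=1}^{h}\pi^{*}(y_{i}\mid x,y_{1:i-1})$ at every prefix. Call a prefix \emph{alive} if its cumulative probability is at least $A^{-\xi}$ and \emph{dead} otherwise. Two lemmas drive correctness and efficiency: a pruning lemma that no completion of a dead prefix can be optimal, because cumulative probabilities are monotone non-increasing in depth while Assumption~\ref{assumption:dominant:response} guarantees $\pi^{*}(y^{*}\mid x)\ge A^{-\xi}$; and a pigeonhole lemma that at each depth at most $A^{\xi}$ prefixes can be alive, since their cumulative probabilities sum to at most $1$. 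The algorithm iteratively picks any unpruned candidate path, queries it, prunes the subtree below the first dead prefix it uncovers (or records the probability of a fully alive leaf), and returns the best leaf once only one remains.

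The accounting is that each query terminates at either a fresh dead boundary node or an alive leaf, so it suffices to bound their total count by $A^{\min\{\xi+1,H\}}$. Alive leaves number at most $A^{\xi}$ by the pigeonhole lemma. For dead boundaries, I plan to exploit the fact that at any alive internal node the $A$ child conditional probabilities sum to $1$, so once $A-1$ of them have been revealed through prior queries the last is determined by complementarity and needs no additional query. Organizing the queries as a DFS on the alive tree and charging each one either to a newly opened dead subtree or to an alive leaf should yield the desired bound, with the case $\xi+1>H$ being trivial since only $A^{H}$ responses exist. The main obstacle is making this amortization precise enough to remove any spurious factor of $H$ and match the stated $A^{\min\{\xi+1,H\}}$; I expect this to require a global accounting against the total cross-section of the alive tree at its boundary rather than a loose depth-by-depth union bound.
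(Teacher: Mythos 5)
Your overall architecture matches the paper's. For the token-reward upper bound you build the same object the paper does: the tree of prefixes thresholded at $A^{-\xi}$, pruning below the first dead prefix, with the pigeonhole bound of at most $A^{\xi}$ alive prefixes per depth; the paper's sets $\cN$ and $\cN^*$ in \eqref{eq:special:nodes} are exactly your ``first dead prefix'' boundary and your alive leaves, and its exploration strategy (query a fresh path avoiding all previously discovered boundary nodes) is your DFS on the alive tree. For the sentence-reward half you are actually more careful than the paper, which disposes of the lower bound in one sentence (``the learner must try every possible response''): your adversarial family in which the reward takes only two values, with mass exactly $A^{-\xi}$ on a hidden $y^{\star}$ and the remainder spread uniformly, gives a genuine indistinguishability argument and is worth keeping as written.

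The obstacle you flag at the end is real, and you should be aware that the paper does not overcome it either: its proof simply asserts, as observation (iii), that $|\cN \cup \cN^*| \le A^{\xi+1}$ and then charges one query per element. That count is false in general. Take $A=2$ and $\xi=1$ (threshold $1/2$), and let the spine node at depth $h$ have cumulative probability $1/2 + 2^{-(h+1)}$; each spine node sheds one dead child of cumulative probability $2^{-(h+2)}$, so $|\cN| = H$ and the paper's covering strategy uses $H+1$ queries, exceeding $A^{\xi+1}=4$ once $H>4$. The honest count from the depth-by-depth argument is $|\cN\cup\cN^*| = O(H A^{\xi+1})$, since each of the at most $A^{\xi}$ alive nodes per depth contributes at most $A$ boundary children. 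Your complementarity idea does rescue the example above: for $A=2$ the single unqueried sibling of each queried alive child is certified dead for free, and one can push this to a genuine $O(A^{\xi})$-type bound by noting that the alive leaves together with the alive nodes all of whose children are dead form an antichain of nodes of mass at least $A^{-\xi}$, hence number at most $A^{\xi}$. For general $A$, however, an alive node can have several dead children whose individual status is not determined by the queried mass (an adversary can reveal the low-mass children first), so the amortization does not obviously eliminate the $H$ factor. You should therefore either carry out the complementarity accounting in full (at minimum for $A=2$, where it closes), or state the bound you can actually prove, namely $O(H\,A^{\min\{\xi+1,H\}})$; do not treat the missing step as routine, since it is precisely the point at which the paper's own proof is incomplete.
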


\begin{proof}
    If only the sentence-level reward $r_s$ is available, the learner must try every possible response and determine the optimal one by ranking the collected sentence-level reward signals, resulting in a sample complexity of $A^H$. Instead, we consider a binary tree with depth $H + 1$, where each node is indexed by some token sequence $y_{1:h}$ and has $A$ children $\{(y_{1:h}, y_{h+1})\}_{y_{h+1} \in \cA}$. All $A^H$ leaf nodes denote a unique prompt-response pair $(x, y_{1:H})$. 
    We define two disjoint node sets:
    \# \label{eq:special:nodes}
    \cN = \big\{y_{1:h}: \pi^*(y_{1:h} \given x) < A^{-\xi}, \pi^*(y_{1:h-1} \given x) \ge A^{-\xi}\big\}, \qquad \cN^* \big\{ y_{1:H} : \pi^*(y_{1:H} \given x) \ge A^{-\xi} \big\}.
    \#
    Our key observations are that (i) each path must contain a node in $\cN$ or $\cN^*$, (ii) the path containing the node in $\cN$ is suboptimal; and (iii) $|\cN \cup \cN^*| \le A^{\xi+1}$. The exploration strategy is to query a new path that does not contain the nodes in $\cN \cup \cN'$ that have been visited.  Since each query of a new path (response with length $H$) can identify a new additional node in $\cN \cup \cN^*$, after at least $A^{\xi + 1}$ queries, we collect a set of paths where each node in $\cN \cup \cN^*$ belongs to one of the paths. Finally, ranking all gathered rewards of the node in $\cN^*$ identifies the optimal $y^* = y_{1:H}^*$. Together with the fact that there exists as most $A^H$ nodes, we finish the proof of Theorem~\ref{prop:token:reward}. To facilitate understanding, we visualize a simplified learning process in Figure~\ref{fig:process}.
\end{proof}

Since $\xi \ll H$ typically holds in practice, the gap between $A^H$ and $A^{\min\{\xi+1, H\}}$ is deemed large. Hence,
Proposition~\ref{prop:token:reward} reveals the significant separation of sample complexity between two types of reward signals, providing theoretical insights into the superiority of the token-wise MDP formulation over the sentence-wise bandit formulation.

\section{Reinforced Token Optimization}
\label{sec:alg_theory}

Motivated by Section~\ref{sec:formulation}, we tackle RLHF by treating it as an MDP problem. Under this MDP framework, we aim to develop an algorithmic framework that fully utilizes the token-level information. To this end, we develop the Reinforced Token Optimization (\texttt{RTO}) algorithm. At a high level, \texttt{RTO} consists of two main steps: {\color{bluee}(i) token-wise reward learning}, where \texttt{RTO} learns a token-wise reward based on the preference data; and {\color{bluee} (ii) optimizing token-wise reward} through RL training methods such as PPO. In Section~\ref{sec:theory:version}, we provide a theoretically grounded version of \texttt{RTO} with guaranteed sample complexity. To align more closely with practice, we present a practical implementation of \texttt{RTO} in Section~\ref{sec:practical:version}.

\subsection{Theoretical Version with Sample Complexity Guarantee} \label{sec:theory:version}

    We focus on the offline setting and assume the access to an offline dataset $\cD  = \{(\tau^w, \tau^l)\}$ that contains several trajectory pairs, where $\tau^w = \{(s_h^w, a_h^w)\}_{h=1}^H$ is preferred over $\tau^l = \{(s_h^l, a_h^l)\}_{h=1}^H$. Each pair of trajectories shares the same initial state/prompt (i.e., $s_1^w = s_1^l$), but differs in the subsequent tokens.
    We also assume that the reward function is linear, and our following results are ready to be extended to general function approximation \citep{chen2022human,wang2023rlhf,zhan2023provable}.
    \begin{assumption}[Linear Reward] \label{assumption:linear}
    We assume that the reward function $r$ is linear, i.e., $r(s, a) = \phi(s, a)^\top \theta^*$ for some known feature $\phi: \cS \times \cA \rightarrow \RR^d$ and unknown vector $\theta^* \in \RR^d$. We also assume that $\|\phi(\cdot, \cdot)\|_2 \le L$ and $\| \theta^* \|_2 \le B$.
    \end{assumption}
    Following the standard reward learning pipeline \citep{ouyang2022training}, we learn the reward function via maximum likelihood estimation (MLE). Specifically, if we parametrize the reward function by $\theta$, then the MLE is given by 
    \# \label{eq:mle}
    \theta_{\mle} = \argmax_{\|\theta\|_2 \le B} \cL_{\cD}(\theta), \quad \text{where } \cL_\cD(\theta) = \sum_{(\tau^w, \tau^l) \in \cD} & \bigg[  \log \Big( \sigma  \big(\sum_{h=1}^H r_\theta(s_h^w, a_h^w) - \sum_{h=1}^H r_\theta(s_h^l, a_h^l) \big) \Big)  \bigg].
    \#
    Inspired by previous literature in offline RL \citep{jin2021pessimism,rashidinejad2021bridging,xiong2022nearly,zhu2023principled,zhan2023provable}, given the MLE $\theta_{\mle}$, we construct the pessimistic token-wise reward estimation as
    \# \label{eq:pessimistic:reward}
    \hat{r}(s, a) = \phi(s, a)^\top \theta_{\mle} - \varrho \cdot \|\phi(s, a)\|_{\Sigma_\cD^{-1}},
    \#
    where $\Sigma_\cD = \sum_{(\tau^1, \tau^2) \in \cD} [\sum_{h=1}^H (\phi(s_h^1, a_h^1) - \phi(s_h^2, a_h^2) ) (\sum_{h=1}^H (\phi(s_h^1, a_h^1) - \phi(s_h^2, a_h^2) ))^\top] + \lambda I_d$, $\lambda > 0$ is a tuning parameter, and $\varrho$ is a problem-dependent coefficient will be specified in Theorem~\ref{thm:offline} and \eqref{eq:varrho}.  Finally, \texttt{RTO} outputs the optimal policy $\hat{\pi}$ with respect to $\hat{r}$, i.e., $\hat{\pi} = \argmax_\pi V_\beta^\pi(s; \hat{r})$ for any $s \in \cS$. The pseudocode of \texttt{RTO} is given in Algorithm~\ref{alg:offline}.

\begin{algorithm}[t]
\caption{Reinforced Token Optimization (Theoretical Version)}
\label{alg:offline}
\begin{algorithmic}[1]
    \STATE \textbf{Input:} Offline dataset $\cD$, $\lambda > 0$, $\beta > 0$, and problem dependent coefficient $\varrho$.
    \STATE Compute $\theta_{\mle}$ based on $\cD$ by maximizing the loglikelihood given in \eqref{eq:mle}. 
    \STATE Calculate the pessimistic reward $\hat{r}$ via \eqref{eq:pessimistic:reward}. {\color{bluee}\COMMENT{token-wise reward learning}}
    \STATE Compute the corresponding optimal policy $\hat{\pi}$ with respect to $\hat{r}$. {\color{bluee}\COMMENT{optimizing token-wise reward}}
    \STATE \textbf{Output:} policy $\hat{\pi}$.
\end{algorithmic}
\end{algorithm}

\begin{theorem} \label{thm:offline}
    Suppose Assumption~\ref{assumption:linear} holds. For $\beta > 0$, $\lambda > 0$, $\delta \in (0, 1)$, if we choose $\varrho = \tilde{\cO}(\sqrt{d})$ (see~\eqref{eq:varrho}), then the output policy $\hat{\pi}$ of Algorithm~\ref{alg:offline} satisfies
    \$
    \SubOpt(\hat{\pi}) \le 2 \varrho \cdot  \EE_{(s, a) \sim d^*} \big[\|\phi(s, a) \|_{\Sigma_\cD^{-1}} \big] - \beta \cdot \EE_{s \sim d^*} \big[\mathrm{KL}\big(\pi_\beta^*(\cdot \given s) \| \hat{\pi}(\cdot \given s) \big) \big].
    \$
\end{theorem}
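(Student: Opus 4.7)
The plan is to follow the standard pessimism template for offline RL, adapted to the KL-regularized MDP. The key idea is a uniform MLE concentration bound which, together with the penalty in \eqref{eq:pessimistic:reward}, makes $\hat r$ a pointwise lower bound on $r$ with high probability. A three-term decomposition of the suboptimality around $\hat r$ then isolates a reward-error piece (controlled by $2\varrho\|\phi\|_{\Sigma_\cD^{-1}}$ via pessimism) and a policy-comparison piece (from which the entropic regularization produces the negative KL term in the statement).

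First, I would establish MLE concentration for the Bradley--Terry log-likelihood in \eqref{eq:mle}. Because $r_\theta$ is linear, the likelihood reduces to logistic regression whose feature is the trajectory-difference $\Delta(\tau^w,\tau^l)=\sum_{h=1}^H(\phi(s_h^w,a_h^w)-\phi(s_h^l,a_h^l))$, and its Gram matrix (plus the ridge $\lambda I_d$) is exactly $\Sigma_\cD$. A standard strong-convexity argument for this loss, in the spirit of Zhu et al.\ 2023 and Zhan et al.\ 2023, gives $\|\theta_{\mle}-\theta^*\|_{\Sigma_\cD}\le \tilde{\cO}(\sqrt d)$ with probability at least $1-\delta$. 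Combined with Cauchy--Schwarz in the $\Sigma_\cD$-norm, the choice $\varrho=\tilde{\cO}(\sqrt d)$ (absorbing the $\log(1/\delta)$ factor) yields the pointwise reward-error bound $|\phi(s,a)^\top(\theta_{\mle}-\theta^*)|\le \varrho\|\phi(s,a)\|_{\Sigma_\cD^{-1}}$, so that on the good event $\hat r(s,a)\le r(s,a)\le \hat r(s,a)+2\varrho\|\phi(s,a)\|_{\Sigma_\cD^{-1}}$ uniformly in $(s,a)$.

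I would then decompose the suboptimality through $\hat r$:
\[
\SubOpt(\hat\pi) = \underbrace{\bigl[V_\beta^{\pi_\beta^*}(\rho;r)-V_\beta^{\pi_\beta^*}(\rho;\hat r)\bigr]}_{\mathrm{(I)}} + \underbrace{\bigl[V_\beta^{\pi_\beta^*}(\rho;\hat r)-V_\beta^{\hat\pi}(\rho;\hat r)\bigr]}_{\mathrm{(II)}} + \underbrace{\bigl[V_\beta^{\hat\pi}(\rho;\hat r)-V_\beta^{\hat\pi}(\rho;r)\bigr]}_{\mathrm{(III)}}.
\]
Expanding each value via \eqref{eq:q:v}, term $\mathrm{(III)}$ equals $\EE_{(s,a)\sim d^{\hat\pi}}[\hat r(s,a)-r(s,a)]\le 0$ by pessimism, and term $\mathrm{(I)}$ equals $\EE_{(s,a)\sim d^*}[r(s,a)-\hat r(s,a)]\le 2\varrho\cdot \EE_{(s,a)\sim d^*}[\|\phi(s,a)\|_{\Sigma_\cD^{-1}}]$ by the pointwise bound of the first paragraph. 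This accounts for the first term in the statement.

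The step I expect to be the main technical point is term $\mathrm{(II)}$, where the entropic regularization pays back the KL term. Since $\hat\pi$ is the regularized optimal policy for $\hat r$, the soft-Bellman representation $\hat\pi(a|s)=\exp\bigl((Q_\beta^{\hat\pi}(s,a;\hat r)-V_\beta^{\hat\pi}(s;\hat r))/\beta\bigr)$ from \eqref{eq:optimal:policy} allows me to telescope $V_\beta^{\hat\pi}(s;\hat r)-V_\beta^{\pi_\beta^*}(s;\hat r)$ along trajectories drawn from $\pi_\beta^*$, producing the regularized performance-difference identity
\[
V_\beta^{\hat\pi}(\rho;\hat r) - V_\beta^{\pi_\beta^*}(\rho;\hat r) = \beta\cdot \EE_{s\sim d^*}\bigl[\mathrm{KL}\bigl(\pi_\beta^*(\cdot|s)\,\|\,\hat\pi(\cdot|s)\bigr)\bigr].
\]
Hence $\mathrm{(II)}=-\beta\cdot \EE_{s\sim d^*}[\mathrm{KL}(\pi_\beta^*(\cdot|s)\|\hat\pi(\cdot|s))]$, which matches the second term in the theorem. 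The only subtlety is that generated responses have variable length, but this is handled cleanly by the absorbing-$\texttt{EoS}$ convention from the footnote after \eqref{eq:BT:mdp}, so that the telescoping terminates within $H$ steps. Summing the bounds for $\mathrm{(I)}$, $\mathrm{(II)}$, and $\mathrm{(III)}$ on the good event gives the claimed inequality.
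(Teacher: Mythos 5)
Your proposal is correct and follows essentially the same route as the paper's proof: the three-term decomposition through $\hat r$ is the paper's decomposition (with terms reordered), the reward-error terms are handled identically via the MLE concentration bound, Cauchy--Schwarz, and pessimism, and your telescoping argument for term $\mathrm{(II)}$ is precisely the paper's regularized performance-difference lemma combined with the soft-Bellman form of $\hat\pi$, yielding the same exact $-\beta\,\EE_{s\sim d^*}[\mathrm{KL}(\pi_\beta^*(\cdot\given s)\,\|\,\hat\pi(\cdot\given s))]$ term.
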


\begin{proof}
    See Appendix~\ref{appendix:pf:thm:offline} for a detailed proof.
\end{proof}

The first term in Theorem \ref{thm:offline} measures how well the offline dataset covers the trajectory generated by the policy $\pi_\beta^*$. Typically, this term decreases at a rate of $|\cD|^{-1/2}$ under the mild partial coverage assumption~\citep{jin2021pessimism,uehara2021pessimistic,xiong2022nearly,zhu2023principled,zhan2023provable}, where $|\cD|$ is the size of the offline dataset. The second KL term is always negative, and it arises from the goal of learning a regularized value. We also remark that our algorithm relies on the known transition kernel to compute the exact optimal policy with respect to $\hat{r}$. While this is natural in the context of large language models, we provide insights on how to extend our findings to stochastic regularized MDPs and the variant of our \texttt{RTO} algorithm in Appendix~\ref{appendix:variant}.

There have also been previous works \citep{pacchiano2021dueling,chen2022human,wang2023rlhf,li2023reinforcement,zhan2023provable} studying RLHF under the MDP framework, also known as dueling RL and preference-based RL. However, these works do not consider the KL constraint, which is an essential component of RLHF. Furthermore, they do not explicitly emphasize the superiority of the MDP framework over the contextual dueling bandit problem in the context of LLMs, and their proposed algorithms lack practical implementation. In contrast, we will provide a practical implementation of our algorithm, demonstrating the practicality of our approach.

\begin{algorithm}[t]
\caption{Reinforced Token Optimization (Practical Version)}
\label{alg:offline:practical}
\begin{algorithmic}[1]
    \STATE \textbf{Input:} Offline dataset $\cD$, parameters $\beta_1, \beta_2 > 0$, DPO algorithm \texttt{DPO}, and PPO trainer $\texttt{PPO-Update}$.
    \STATE Compute $\pi_{\mathrm{dpo}} \leftarrow \texttt{DPO}(\cD)$ and let $\pi_0 = \pi_{\mathrm{ref}}$ as the reference model. 
    \FOR{$t = 1, \dots, T$}
    \STATE Get a batch of samples $\cD_t$ from the dataset $\cD$ but we only keep the prompts.
    \STATE For each prompt $x\in\cD_t$, generate a response $y\sim\pi_{t-1}(\cdot \given x)$.
    \STATE Calculate the token-wise reward $r_{\mathrm{rto}}$ for each pair $(x,y)$ by \eqref{eq:rto:reward}. {\color{bluee}\COMMENT{token-wise reward learning}}
    \STATE $\pi_t \leftarrow \texttt{PPO-Update}(\pi_{t-1}, r_{\mathrm{rto}}, \{(x, y)\}_{x \in \cD_t})$. {\color{bluee}\COMMENT{optimizing token-wise reward}}
    \ENDFOR
    \STATE \textbf{Output:} policy $\pi_T$. 
\end{algorithmic}
\end{algorithm}

\subsection{Practical Implementation} \label{sec:practical:version}
In this subsection, we shift our focus to developing a practical version of \texttt{RTO}. The key challenge in implementing \texttt{RTO} in Algorithm~\ref{alg:offline} lies in learning the token-wise reward to be optimized from the offline data. In the most popular frameworks outlined in Instruct-GPT \citep{ouyang2022training}, Claude \citep{bai2022training}, and LlaMA2 \citep{touvron2023llama} projects replace the last layer of the LLM with a linear layer for a scalar output and maximize the log-likelihood as in \eqref{eqn:mle_old}. However, this approach gives only a sentence-level reward. To bridge the gap in the literature, we present our practical version of \texttt{RTO} in Algorithm~\ref{alg:offline:practical}, which features a novel calculation of token-wise reward. Our key observation is that, given a trajectory $\tau = \{(s_h, a_h)\}_{h=1}^H$, we have
\# \label{eq:prac:1}
\sum_{h=1}^H \beta \log \frac{\pi_\beta^*(a_h \given s_h)}{\pi_{\mathrm{ref}}(a_h \given s_h)} &= \sum_{h = 1}^H \big(Q_\beta^*(s_h, a_h) - V_\beta^*(s_h) - \log \pi_{\mathrm{ref}}(a_h \given s_h) \big) \notag \\
& = \sum_{h = 1}^H r(s_h, a_h) - V_\beta^*(s_1) + \underbrace{\sum_{h = 1}^{H-1} \big( \EE_{s' \sim \cP(\cdot \given s_h, a_h)}[V_\beta^*(s')] - V_\beta^*(s_{h+1}) \big)}_{(\star)},
\# 
where the first equality uses the closed-form of optimal policy $\pi_\beta^*(a \given s) = \exp\{ (Q_\beta^*(s, a) - V_\beta^*(s))/\beta \}$ in \eqref{eq:optimal:policy}, and the second equality follows from the fact that $Q_\beta^{\pi}(s, a) = r_\beta(s, a) + \EE_{s' \sim \cP(\cdot \given s, a)}[V_\beta^\pi(s')]$ in \eqref{eq:bellman} with $r_\beta(s, a) = r(s, a) + \beta \log \pi_{\mathrm{ref}}(a \given s)$. We focus on the typical LLM generation scenario where the transition kernel is deterministic. Then we have $(\star) = 0$ in \eqref{eq:prac:1}, yielding that 
\$
\sum_{h = 1}^H r(s_h, a_h) = \sum_{h=1}^H \beta \log \frac{\pi_\beta^*(a_h \given s_h)}{\pi_{\mathrm{ref}}(a_h \given s_h)} + V_\beta^*(s_1) .
\$
Building upon this result and combining it with the definition of the BT model in \eqref{eq:BT:mdp}, for any trajectory pair $\{\tau^j =  \{(s_{h}^j, a_{h}^j)\}_{h=1}^H\}_{j=1}^2$ satisfying $s_1^1 = s_1^2$, we have
\# \label{eq:prac:2}
    \PP(\tau^1 \succ \tau^2) = \sigma\bigg( \sum_{h=1}^H r(s_h^1, a_h^1) - \sum_{h=1}^H r(s_h^2, a_h^2) \bigg) = \sigma \bigg( \sum_{h=1}^H \beta \log \frac{\pi_\beta^*(a_h^1 \given s_h^1)}{\pi_{\mathrm{ref}}(a_h^1 \given s_h^1)} - \sum_{h=1}^H \beta \log \frac{\pi_\beta^*(a_h^2 \given s_h^2)}{\pi_{\mathrm{ref}}(a_h^2 \given s_h^2)} \bigg).
\#
An interesting observation is that, based on the autoregressive nature of policies, \eqref{eq:prac:2} aligns with the learning objective of DPO proposed by \citet{rafailov2023direct}, but under the token-level MDP instead of the sentence-level bandit setup. Similar to the bandit setting where the learning objective is equivalent to a BT model with sentence-wise reward $r^*(x, y) = \beta \log \frac{\pi_{\beta}^*(y \given x)}{\pi_{\mathrm{ref}}(y \given x)}$ \citep{rafailov2023direct}, \eqref{eq:prac:2} shows that the learning objective in token-wise MDP equivalents to a BT model with a token-wise reward function 
\# \label{eq:prac:3}
r^*(s_h = (x, y_{1:h-1}), a_h = y_h) = \beta \log \frac{\pi_\beta^*(a_h \given s_h)}{\pi_{\mathrm{ref}}(a_h \given s_h)} = \beta \log \frac{\pi_\beta^*(y_h \given x, y_{1:h-1})}{\pi_{\mathrm{ref}}(y_h \given x, y_{1:h-1})},
\# 
where $x$ is the prompt, $y_{1:h-1}$ is the tokens generated so far, and $y_h$ is the token chosen at the current step. 
In contrast to the previous PPO implementation with sparse reward in \eqref{eq:ppo:reward}, we will assign the token-wise reward function defined in~\eqref{eq:prac:3} to each step. Formally, for any $h$, we define
\begin{equation}
    \begin{aligned}\label{eq:prac:5}
\beta_1 \log \frac{\pi_\beta^*(y_h \given x, y_{1:h-1})}{\pi_{\mathrm{ref}}(y_h \given x, y_{1:h-1})} - \beta_2 \log \frac{\pi(y_h \given x, y_{1:h-1})}{\pi_{\mathrm{ref}}(y_{h} \given x, y_{1:h-1})}  \approx \beta_1 \log \frac{\pi_{\mathrm{dpo}}(y_h \given x, y_{1:h-1})}{\pi_{\mathrm{ref}}(y_h \given x, y_{1:h-1})} - \beta_2 \log \frac{\pi(y_h \given x, y_{1:h-1})}{\pi_{\mathrm{ref}}(y_{h} \given x, y_{1:h-1})} 
    \end{aligned}
\end{equation}
as the token-wise reward,
where $\beta_1$ and $\beta_2$ are tuning parameters, and $\pi$ is the current policy to be updated. In the last step of \eqref{eq:prac:5}, we use $\pi_{\mathrm{dpo}}$, the policy learned by DPO, as a proxy for the unknown $\pi_\beta^*$. Finally, we employ PPO to optimize the following token-wise reward $r_{\mathrm{rto}}$ 
\begin{equation}
    \begin{aligned} \label{eq:rto:reward}
{r}_{\mathrm{rto}}(x, y_{1:h}) = \begin{cases}
\beta_1 \log \frac{\pi_{\mathrm{dpo}}(y_h \given x, y_{1:h-1})}{\pi_{\mathrm{ref}}(y_h \given x, y_{1:h-1})} - \beta_2 \log \frac{\pi(y_h \given x, y_{1:h-1})}{\pi_{\mathrm{ref}}(y_{h} \given x, y_{1:h-1})}  & \text{ if } h \le H-1, \\
  \beta_1 \log \frac{\pi_{\mathrm{dpo}}(y_h \given x, y_{1:h-1})}{\pi_{\mathrm{ref}}(y_h \given x, y_{1:h-1})} - \beta_2 \log \frac{\pi(y_h \given x, y_{1:h-1})}{\pi_{\mathrm{ref}}(y_{h} \given x, y_{1:h-1})} + \beta_3 \cdot r_{\mle}(x, y_{1:H}) & \text{ if } h = H,
\end{cases}
    \end{aligned}
\end{equation}
where $\beta_3 \ge 0$ is a tuning parameter and $r_{\mathrm{MLE}}$ represents a sentence-level reward. This additional sentence-level reward helps prevent responses from becoming either extremely long or extremely short. This aligns with the observation that ensemble rewards can effectively mitigate the overoptimization issues \citep{coste2023reward}. We also remark that the sentence-level reward $r_{\text{MLE}}$ can be much smaller in magnitude compared to both the policy model (actor) and DPO reward model, making the overall computational cost of RTO \textit{comparable} to the standard RLHF pipeline: The lower cost of using a much smaller critic in PPO compensates for both the small extra cost required by DPO than reward model, and the training and serving of the tiny reward model. For parameter selection, $\beta_3$ can be set to $1$ when $r_{\text{MLE}}$ is included. This choice is without loss of generality, as the key factor is the ratio of $\beta_3$ to $\beta_1$ and $\beta_2$, rather than its absolute value. $\beta_2$ can be chosen similarly in standard PPO configurations. The only extra hyperparameter $\beta_1$ can be set small to prevent the DPO reward from dominating, thereby requiring minimum tuning.

\section{Experiments}

In this section, we conduct comprehensive alignment experiments to verify the effectiveness of \texttt{RTO}.

\subsection{Benchmark Results}

We present a thorough comparison of RTO with PPO and other widely used direct preference learning algorithms on popular benchmarks to highlight RTO's strong performance.

\paragraph{Task, Data, and Evaluation.} To assess the overall quality of generated text responses across multiple dimensions (e.g., helpfulness, accuracy, and clarity), we employ the dataset \href{https://huggingface.co/datasets/HuggingFaceH4/ultrafeedback_binarized}{UltraFeedback} \citep{cui2023ultrafeedback} that contains comprehensive human feedback annotations on model outputs. We evaluate models using two established benchmarks: AlpacaEval 2 \citep{alpaca_eval} and Arena-Hard \citep{li2024crowdsourced}. These benchmarks assess various conversational abilities across different types of queries. For AlpacaEval 2, we report both standard win rates (WR) and length-controlled win rates (LC). For Arena-Hard, we present the WR along with its style-controlled (SC) version. Both LC and SC are specifically designed to mitigate verbosity bias of llm judge.

\paragraph{Implementation Details of RTO and Baselines.} We employ Llama-3-8B \citep{dubey2024llama} as the base model. For our comparative analysis, we implement several baselines. All subsequent models are initialized with an \href{https://huggingface.co/OpenRLHF/Llama-3-8b-sft-mixture}{open-source} \textbf{SFT} model~\cite{dong2024rlhf} that fine-tunes Llama-3-8B with a diverse mixture of high-quality data. We further train a \textbf{DPO} model, which finetunes the SFT model using the positive/negative preference data. Besides these two RL-free algorithms, we compare three RLHF algorithms relying on RL training.  The first one is the standard \textbf{PPO} algorithm, which directly optimizes sentence-level 8B reward in \eqref{eq:ppo:reward}. Our proposed \textbf{RTO} algorithm leverages both token-wise signals from the DPO model and an additional 1B sentence-wise reward $r_{\mle}$ to compute the RTO reward specified in \eqref{eq:rto:reward}. The policy is then trained to align with human preferences using PPO updates, as detailed in Algorithm~\ref{alg:offline:practical}.  For a comprehensive comparison, we include the length-controlled version of DPO, referred to as \textbf{R-DPO} \citep{park2024disentangling}, along with two preference learning algorithms introduced in concurrent and independent works: \textbf{SimPO} \citep{meng2024simpo} and token-wise DPO (\textbf{TDPO}) \citep{zeng2024token}, as baselines. We include more details in appendix \ref{appendix:implementation_details}.


\paragraph{RTO Outperforms PPO and Other Direct Preference Learning Algorithms.} 
As demonstrated in Table~\ref{tab:main_result}, while all evaluated algorithms show improvement over the base SFT model, our \texttt{RTO} algorithm achieves superior performance across all benchmarks. Specifically, \texttt{RTO} outperforms PPO by achieving a 7.53\% higher win rate in the AlpacaEval 2 LC benchmark and a 4.1\% higher win rate in the Arena-Hard SC benchmark. These results highlight the effectiveness of incorporating token-wise reward (dense reward) into PPO training. Furthermore, when compared to the leading preference learning baselines, \texttt{RTO} demonstrates improvements of 2 and 4 points in the AlpacaEval 2 LC benchmark and the Arena-Hard SC benchmark, respectively.


\begin{table}[t]
    \centering
    \resizebox{0.7\textwidth}{!}{
    \begin{tabular}{l|ccccccc}
        \toprule
        \multirow{2}{*}{Metric} & \multicolumn{7}{c}{Method} \\
        \cmidrule{2-8}
        & SFT & DPO & R-DPO & SimPO & TDPO & PPO & RTO \\
        \midrule
        AE (LC) & 13.22 & 17.40 & 18.34 & 25.46 & 20.13 & 19.47 & \textbf{27.00} \\
        AE (WR) & 8.58 & 12.23 & 12.03 & 20.20 & 11.97 & 12.89 & \textbf{22.45} \\
        AH (SC) & 9.2 & 13.2 & 14.2 & 14.5 & 13.2 & 16.2 & \textbf{20.3} \\
        AH (WR) & 8.9 & 13.8 & 14.1 & 15.2 & 12.3 & 15.6 & \textbf{21.4} \\
        \bottomrule
        \end{tabular}
        }
        \caption{AlpacaEval 2 (\textbf{AE}) and Arena-Hard (\textbf{AH}) results.}
        \label{tab:main_result}
        \vspace{-10pt}
\end{table}

\subsection{In-depth Analysis of RTO Performance}

In this subsection, we provide a detailed analysis of RTO. First, we examine the influence of reward granularity by comparing the performance of RL training based on different reward types. Next, we demonstrate that the token-wise reward (provided by DPO) primarily serves as reward shaping, which contributes significantly to RTO's success. Finally, we investigate RTO's sample efficiency to support our theoretical findings.

\begin{wrapfigure}{r}{0.5\textwidth}
    \centering
    \vspace{-10pt}
    \begin{tabular}{l|cccc}
    \toprule
    \multirow{2}{*}{Method} & \multicolumn{2}{c}{AlpacaEval 2} & \multicolumn{2}{c}{Arena-Hard} \\
    \cmidrule{2-5}
    & LC & WR & SC & WR \\
         \midrule
    RTO & 27.00 & \textbf{22.45} & \textbf{20.3} & \textbf{21.4} \\
    Semi-RTO & 23.77 & 19.17 & 19.0 & 19.7 \\
    DDPO & 21.09 & 13.06 & 13.1 & 12.1 \\
    RS-PPO & \textbf{27.52} & 21.69 & 19.2 & 19.9 \\
         \bottomrule
    \end{tabular}
    \caption{Benchmark results of ablations studies.}
    \label{tab:ablations}
\end{wrapfigure}

\paragraph{The Influence of Reward Granularity.} We analyze three reward granularity settings by redistributing token-level rewards used in RTO: 
(i) \textbf{RTO}, where rewards are assigned to each token; 
(ii) \textbf{Semi-RTO}, where the rewards of all tokens in each sentence are reassigned to their delimiter, 
and (iii) \textbf{DDPO}, where all rewards are delayed and assigned to the EoS token.
Table~\ref{tab:ablations} and Figure~\ref{fig:ablation:reward_granularity} clearly demonstrates that denser rewards lead to better performance. 

\paragraph{Reward Shaping via DPO Reward is the Key to RTO's Success.} We demonstrate that the superior performance of \texttt{RTO} is not primarily due to replacing the reward model trained with MLE with the implicit reward from DPO. Instead, its advantage lies in its role as a reward-shaping mechanism. To illustrate this, we compare RTO and DPPO with another setup, \textbf{RS-PPO}, where the reward matches $r_{\mathrm{rto}}$ in \eqref{eq:rto:reward}, except for subtracting the DPO implicit reward $\beta_1 \log \frac{\pi_{\mathrm{dpo}}(y \mid x)}{\pi_{\mathrm{ref}}(y \mid x)}$ from the last token. This adjustment results in a total reward equivalent to $r_{\mathrm{MLE}}(x, y)$, effectively employing the DPO token-wise implicit reward for reward shaping. From Table~\ref{tab:ablations} and Figure~\ref{fig:ablation:reward_shaping}, we observe that the main contribution of the DPO reward to improving RL training lies in reward shaping rather than altering the total reward through its exact value.

\paragraph{Sample Efficiency of RTO.} To validate the theoretical claims in Theorem~\ref{thm:offline}, which guarantees the provable sample efficiency of \texttt{RTO}, and Proposition~\ref{prop:token:reward}, which demonstrates that \texttt{RTO} is more efficient than PPO due to its use of token-wise rewards instead of sentence-wise rewards, we conducted experiments using only a fraction of the full dataset. These experiments evaluated the ability of \texttt{RTO} and PPO to learn an effective policy with limited data. As shown in Figure~\ref{fig:sample_efficiency}, \texttt{RTO} matches PPO’s performance using only about $1/8$ of the data and ultimately surpasses PPO’s final performance. Additionally, \texttt{RTO} exhibits superior data scaling behavior compared to PPO --- \texttt{RTO} continues to improve with more data, while PPO's performance saturates early.

\begin{figure}[h]
    \centering
    \subfigure[Effect of different reward granularity.]{
        \label{fig:ablation:reward_granularity}
        \includegraphics[width=0.3\textwidth]{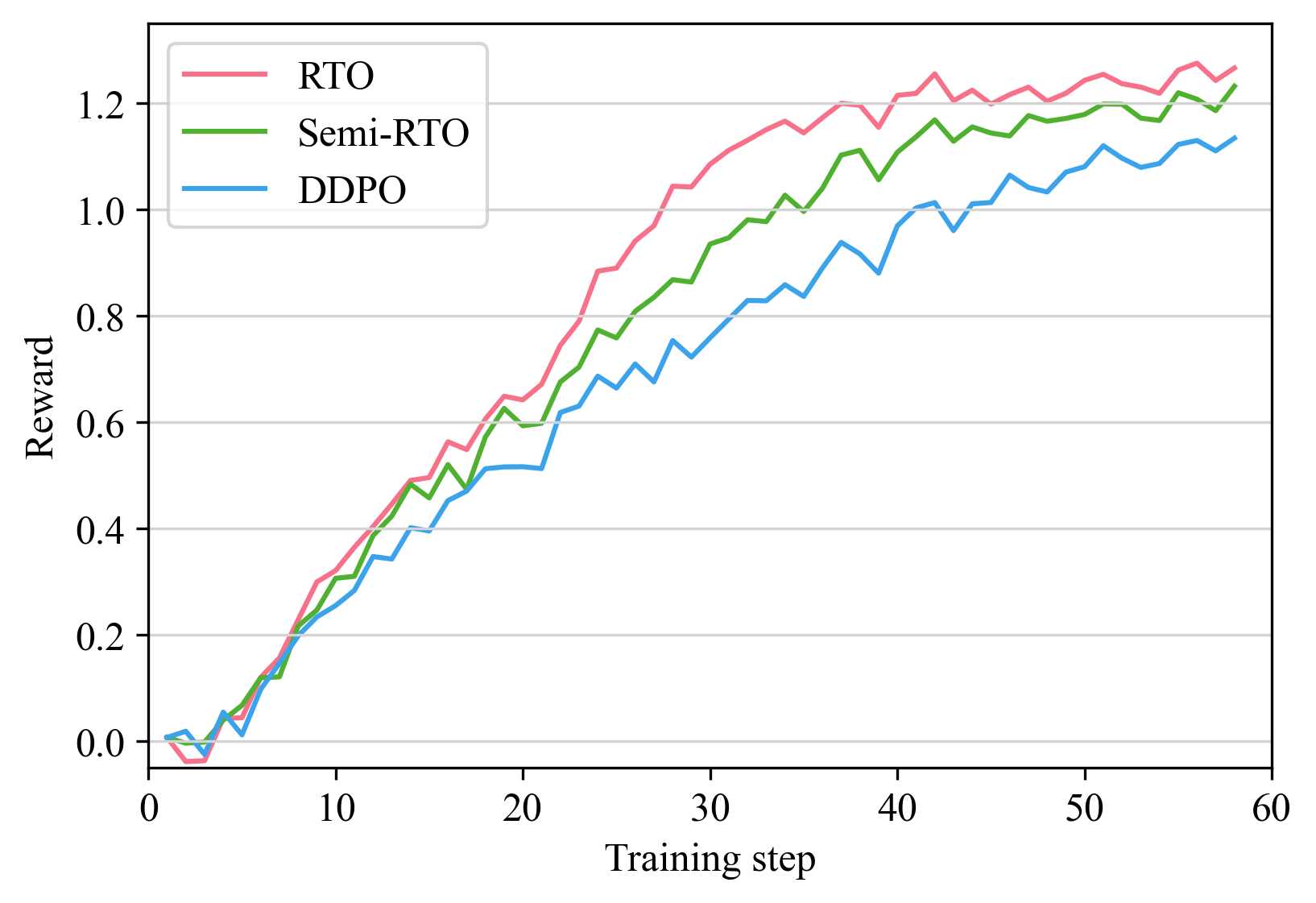}
    }
    \vspace{-10pt}
    \subfigure[Effect of reward shaping.]{
        \label{fig:ablation:reward_shaping}
        \includegraphics[width=0.3\textwidth]{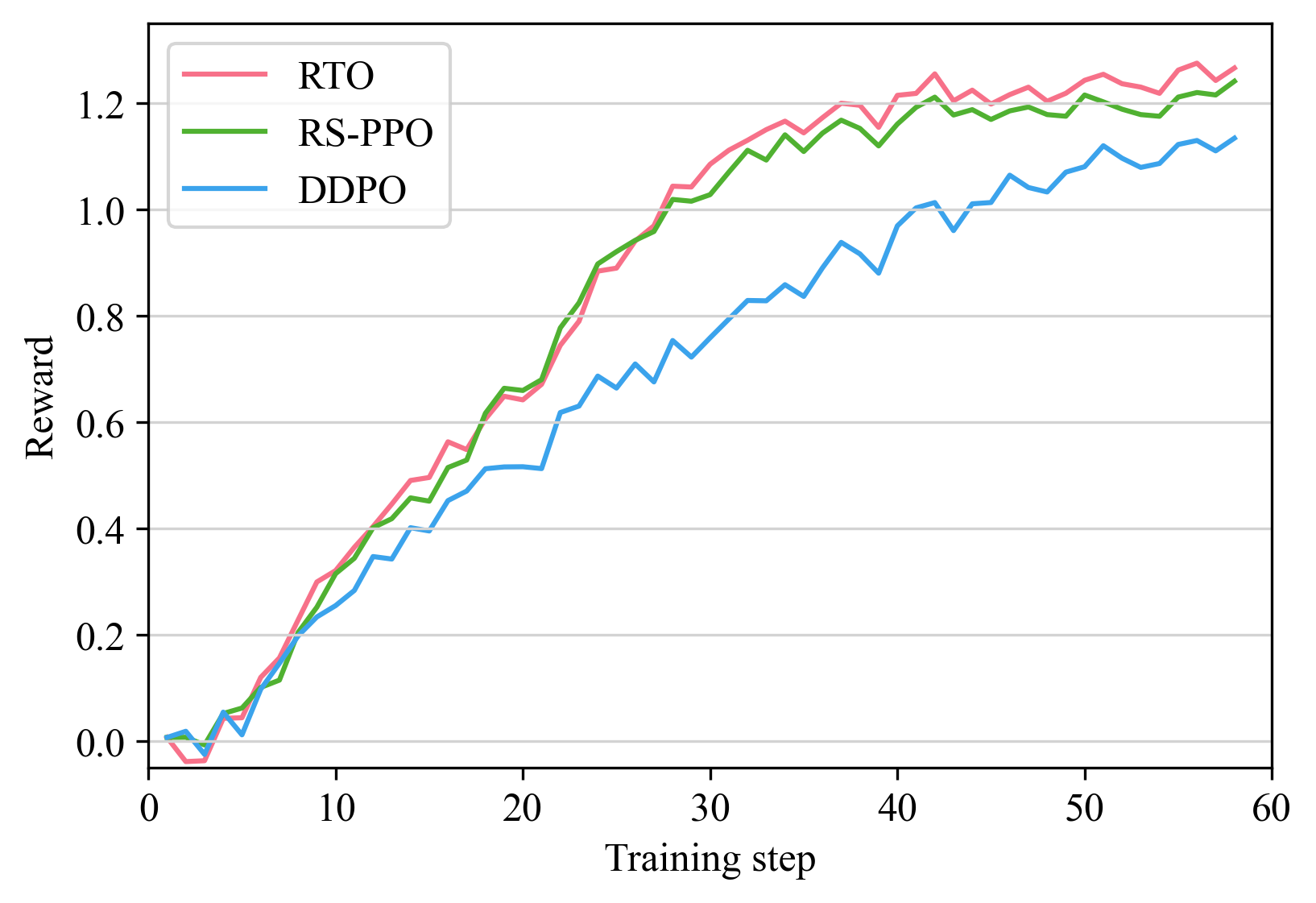}
    }
    \vspace{-10pt}
    \subfigure[Data scaling behavior of PPO and RTO.]{
        \label{fig:sample_efficiency}
        \includegraphics[width=0.3\textwidth]{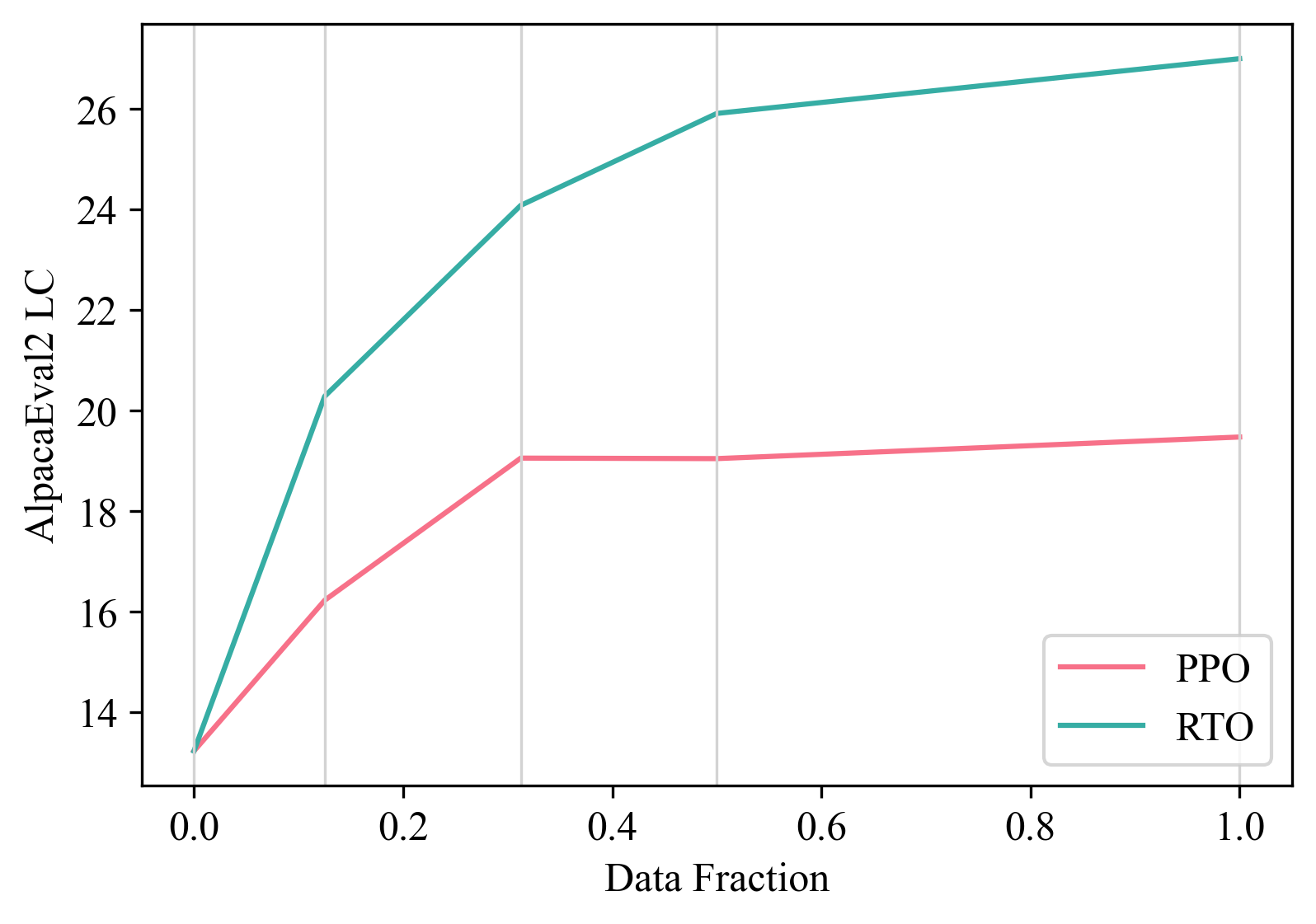}
    }
    \vspace{10pt}
    \caption{{(a) and (b) show the obtained reward of $r_\text{MLE}$ throughout training. (c) shows the AlpacaEval 2 performance of PPO and RTO when trained on fractions of samples.}}
    \label{fig:in_depth}
\end{figure}

\paragraph{Additional Experiments.} 
To showcase the applicability of \texttt{RTO}, we conducted additional experiments demonstrating: (i) its effectiveness \textbf{beyond PPO} by incorporating the learned token-wise reward function into REINFORCE-type algorithms \citep{williams1992simple,hu2025reinforce++}, yielding significant improvements (Appendix~\ref{appendix:rpp}), and (ii) its utility for diverse alignment tasks beyond dialogue, such as text \textbf{summarization} \citep{volske2017tl} (Appendix~\ref{appendix:summarization}).

\section{Conclusion}

In this work, we propose an MDP formulation for RLHF that better characterizes token-wise information, along with theoretical insights demonstrating its superiority. Building upon this formulation, we introduce a novel algorithm called Reinforced Token Optimization (\texttt{RTO}), which leverages token-wise rewards to improve the policy. \texttt{RTO} is shown to be both provably sample-efficient and practical. Our practical implementation involves a novel token-wise reward learning approach via DPO, followed by optimization using PPO. This innovative combination of DPO and PPO allows \texttt{RTO} to effectively utilize token-level information and significantly improve the performance of baselines. Furthermore, our research opens up several intriguing future directions, such as designing alternative methods for learning token-wise rewards beyond DPO and exploring other RL algorithms for optimizing token-level rewards besides PPO.

\bibliographystyle{ims}
\bibliography{graphbib}

\newpage
\appendix
\section{Proof of Theorem \ref{thm:offline}} \label{appendix:pf:thm:offline}

Recall that the visitation measure of policy $\pi$ is
\# \label{eq:visitation}
d^\pi(s) = \EE_{s_1 \sim \rho} \bigg[ \sum_{h = 1}^\infty \PP(s_t = s \given s_1 ) \bigg], \qquad d^\pi(s, a) = \EE_{s_1 \sim \rho} \bigg[ \sum_{h = 1}^\infty \PP(s_h = s, a_h = a \given s_1 ) \bigg].
\#
Under this notation, we can rewrite the value function in \eqref{eq:q:v} as
\$
V_\beta^\pi(\rho) = \EE_{(s, a) \sim d^\pi}\big[r(s, a) - \mathrm{KL} \big( \pi(\cdot \given s) \| \pi_{\mathrm{ref}}(\cdot \given s) \big) \big].
\$
For simplicity, we will use the shorthand $d^* = d^{\pi_\beta^*}$.

\begin{proof}[Proof of Theorem~\ref{thm:offline}]
    Our proof relies on the following standard MLE analysis.
    \begin{lemma}[MLE Analysis] \label{lem:confidence:set}
    It holds with probability $1 -\delta$ that 
    \# \label{eq:varrho}
    \|\theta_{\mle} - \theta^*\|_{\Sigma_\cD} \le \varrho := C \cdot \sqrt{\frac{d \log(1/\delta)}{\Upsilon} + \lambda B^2} ,
    \#
    where $C$ is an absolute constant and $\Upsilon = 1/(2 + \exp(-2HLB) + \exp(2HLB))$.
\end{lemma}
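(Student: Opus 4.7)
The plan is to prove the concentration of the MLE $\theta_{\mle}$ by following the standard logistic-regression analysis tailored to the Bradley-Terry likelihood, with the only trajectory-level twist that the relevant feature is a sum of per-token features. First, I would reformulate the likelihood. For each pair $(\tau^w, \tau^l)$ in $\cD$, define the trajectory feature difference $x = \sum_{h=1}^H \phi(s_h^w, a_h^w) - \sum_{h=1}^H \phi(s_h^l, a_h^l)$, so that by linearity $\sum_h r_\theta(s_h^w, a_h^w) - \sum_h r_\theta(s_h^l, a_h^l) = x^\top \theta$ and $\|x\|_2 \le 2HL$. Enumerate the pairs as $\{x_i\}_{i=1}^N$ and write $\cL_\cD(\theta) = \sum_{i=1}^N \log \sigma(x_i^\top \theta)$; equivalently, we can introduce a Bernoulli label $y_i$ with $\EE[y_i\mid x_i] = \sigma(x_i^\top \theta^*)$ and noise $\epsilon_i := y_i - \sigma(x_i^\top \theta^*)$, so that $\nabla \cL_\cD(\theta^*) = \sum_i x_i \epsilon_i$ is a sum of independent centered bounded random vectors, and $\Sigma_\cD = \sum_i x_i x_i^\top + \lambda I_d$.

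Second, I would establish strong concavity of $\cL_\cD$ on the feasible ball. Differentiating twice gives $-\nabla^2 \cL_\cD(\theta) = \sum_i \sigma(x_i^\top \theta)(1-\sigma(x_i^\top \theta)) \, x_i x_i^\top$. For any $\theta$ on the segment between $\theta_{\mle}$ and $\theta^*$, we have $\|\theta\|_2 \le B$, hence $|x_i^\top \theta| \le 2HLB$, which yields $\sigma(x_i^\top \theta)(1-\sigma(x_i^\top \theta)) \ge \Upsilon$ by a direct computation. Hence $-\nabla^2 \cL_\cD(\theta) \succeq \Upsilon \sum_i x_i x_i^\top$. Combining a second-order Taylor expansion with the optimality inequality $\cL_\cD(\theta_{\mle}) \ge \cL_\cD(\theta^*)$ (which holds because $\theta^*$ is feasible), I obtain
\[
\tfrac{\Upsilon}{2}\,\|\theta_{\mle} - \theta^*\|_{\sum_i x_i x_i^\top}^2 \;\le\; \nabla \cL_\cD(\theta^*)^\top (\theta_{\mle} - \theta^*).
\]

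Third, I would control the right-hand side by a self-normalized concentration of the score $\nabla\cL_\cD(\theta^*) = \sum_i x_i \epsilon_i$. Since $|\epsilon_i|\le 1$ and the $\epsilon_i$ are independent and mean-zero conditional on the features, a Bernstein-type self-normalized bound (in the spirit of Abbasi-Yadkori et al., 2011, or a covering-net argument on the unit sphere in the $\Sigma_\cD$ norm) yields
\[
\bigl\|\nabla \cL_\cD(\theta^*)\bigr\|_{\Sigma_\cD^{-1}}^2 \;\le\; C \bigl(d + \log(1/\delta)\bigr)
\]
with probability at least $1-\delta$. Applying Cauchy-Schwarz to the previous inequality with the $\Sigma_\cD$-norm, and decomposing $\|\theta_{\mle}-\theta^*\|_{\Sigma_\cD}^2 = \|\theta_{\mle}-\theta^*\|_{\sum_i x_i x_i^\top}^2 + \lambda\|\theta_{\mle}-\theta^*\|_2^2$, the crude bound $\|\theta_{\mle}-\theta^*\|_2 \le 2B$ contributes the additive $\lambda B^2$ piece and the score bound together with the strong-concavity inequality contributes the $d\log(1/\delta)/\Upsilon$ piece. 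Solving the resulting scalar inequality $\tfrac{\Upsilon}{2} a^2 \le G \sqrt{a^2 + 4\lambda B^2}$ for $a := \|\theta_{\mle}-\theta^*\|_{\sum_i x_i x_i^\top}$ and $G := \|\nabla\cL_\cD(\theta^*)\|_{\Sigma_\cD^{-1}}$ gives the claimed bound with the constant $\varrho$.

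The main obstacle I anticipate is the self-normalized concentration step: plain Hoeffding over a single direction loses a factor of $\sqrt{d}$ in the wrong way, so the score must be controlled directly in the data-adaptive $\Sigma_\cD^{-1}$ norm, which requires either an $\epsilon$-net argument with careful union bound, an explicit self-normalized inequality, or the Laplace-method/MGF argument from Abbasi-Yadkori et al. The remaining work (reduction to logistic regression, bounding $\sigma(1-\sigma)$ below by $\Upsilon$ on the ball, Taylor expansion with optimality, and handling the $\lambda I_d$ regularizer) is routine once the score concentration is in hand.
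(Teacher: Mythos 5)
The paper does not actually prove this lemma --- it simply cites \citet{faury2020improved,pacchiano2021dueling,zhu2023principled} --- and your argument is exactly the standard one those references carry out: reduce to logistic regression on the summed feature differences $x_i$ with $\|x_i\|_2 \le 2HL$, lower-bound $\sigma'(x_i^\top\theta)$ by $\Upsilon$ on the ball $\|\theta\|_2 \le B$ to get strong concavity, combine a second-order Taylor expansion with the optimality of $\theta_{\mle}$, and close with a self-normalized bound on the score $\sum_i x_i \epsilon_i$ in the $\Sigma_\cD^{-1}$ norm; this is correct and complete. The only cosmetic discrepancy is that your final scalar inequality yields a $d\log(1/\delta)/\Upsilon^2$ term (matching the $\gamma^{-2}$ dependence in \citet{zhu2023principled}) rather than the $1/\Upsilon$ written in \eqref{eq:varrho}; since $\Upsilon$ is an absolute constant here and Theorem~\ref{thm:offline} only uses $\varrho = \tilde{\cO}(\sqrt{d})$, this difference is immaterial.
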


\begin{proof}
     See e.g., \citet{faury2020improved,pacchiano2021dueling,zhu2023principled} for a detailed proof.
\end{proof}
    Back to the proof of Theorem~\ref{thm:offline}, we first decompose the suboptimality gap defined in \eqref{eq:def:subopt} as
    \# \label{eq:221}
    \SubOpt(\hat{\pi}) & = V_\beta^*(\rho; r) - V_\beta^{\hat{\pi}}(\rho; r) \notag \\
    & = \EE_{(s, a) \sim d^*}\big[r(s, a) - \beta \cdot \mathrm{KL}\big( \pi_\beta^*(\cdot \given s) \| \pi_{\mathrm{ref}}(\cdot \given s) \big) \big] - \big( \EE_{(s, a) \sim d^{\hat{\pi}}}\big[r(s, a) - \beta \cdot \mathrm{KL}\big( \hat{\pi}(\cdot \given s) \| \pi_{\mathrm{ref}}(\cdot \given s) \big) \big] \big) \notag \\
    &= \underbrace{\EE_{(s, a) \sim d^*}[r(s, a) - \hat{r}(s, a)]}_{\mathrm{Term (i)}} + \underbrace{\EE_{(s, a) \sim d^{\hat{\pi}}}[\hat{r}(s, a) - r(s, a)]}_{\mathrm{Term (ii)}}   + \underbrace{V_\beta^{\pi_\beta^*}(\rho; \hat{r}) - V_{\beta}^{\hat{\pi}}(\rho; \hat{r})}_{\mathrm{Term (iii)}} .
        \#
    Then we analyze these three terms respectively. 
    \paragraph{Term (i).} Recall that the pessimistic reward $\hat{r}$ defined in \eqref{eq:pessimistic:reward} takes the form
    \$
    \hat{r}(s, a) = \phi(s, a)^\top \theta_{\mle} - \varrho \cdot \|\phi(s, a)\|_{\Sigma_\cD^{-1}}.
    \$
    Then we can rewrite Term (i) in \eqref{eq:221} as
    \# \label{eq:222}
    \mathrm{Term (i)} & = \EE_{(s, a) \sim d^*} \big[\phi(s, a)^\top (\theta^* - \theta_{\mle}) + \varrho \cdot \|\phi(s, a)\|_{\Sigma_\cD^{-1}}\big] \notag \\
    & \le \EE_{(s, a) \sim d^*} \big[ \|\phi(s, a) \|_{\Sigma_\cD^{-1}} \cdot \|\theta^* - \theta_{\mle}\|_{\Sigma_\cD} + \varrho \cdot \|\phi(s, a) \|_{\Sigma_\cD^{-1}}\big]  \notag \\
    & \le 2 \varrho \cdot  \EE_{(s, a) \sim d^*} \big[\|\phi(s, a)  \|_{\Sigma_\cD^{-1}} \big],
    \# 
    where the first inequality is obtained by Cauchy-Schwarz inequality, and the last inequality follows from Lemma~\ref{lem:confidence:set}.

    \paragraph{Term (ii).} Similar to the derivation of \eqref{eq:222}, we have
    \# \label{eq:223}
    \mathrm{Term (ii)} & = \EE_{(s, a) \sim d^{\hat{\pi}}} \big[\phi(s, a)^\top (\theta_{\mle} - \theta^*) - \varrho \cdot \|\phi(s, a) \|_{\Sigma_\cD^{-1}}\big] \notag \\
    & \le \EE_{(s, a) \sim d^{\hat{\pi}}} \big[ \|\phi(s, a) \|_{\Sigma_\cD^{-1}} \cdot \|\theta_{\mle} - \theta^*\|_{\Sigma_\cD} - \varrho \cdot \|\phi(s,a) \|_{\Sigma_\cD^{-1}}\big]  \notag \\
    & \le 0,
    \#
    where the first inequality uses Cauchy-Schwarz inequality, and the last inequality is implied by Lemma~\ref{lem:confidence:set}.
    
    \paragraph{Term (iii).} 
    To handle this term, we introduce the following performance difference lemma for MDP with KL constraint.
    \begin{lemma}[Performance Different Lemma] \label{lem:performance:diff}
     For any reward function $r$ and policy pair $(\pi, \pi')$, it holds that
    \$
    V_\beta^{\pi}(\rho; r) - V_\beta^{\pi'}(\rho; r) = \EE_{(s, a) \sim d^\pi}[Q_\beta^{\pi'}(s, a; r) - V_\beta^{\pi'}(s; r) - \beta \log \pi(a \given s)] .
    \$
\end{lemma}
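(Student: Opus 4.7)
The plan is to follow the classical telescoping argument for the performance-difference lemma, adapted to accommodate the KL-regularization terms. As a preparatory step, I would rewrite the regularized value function using the state-action visitation measure $d^\pi$ defined in \eqref{eq:visitation}. Expanding the definition in \eqref{eq:q:v} and grouping the reference log-term with the reward via $r_\beta(s,a) = r(s,a) + \beta \log \pi_{\mathrm{ref}}(a \given s)$ gives
\[
V_\beta^\pi(\rho; r) \;=\; \EE_{(s,a) \sim d^\pi}\bigl[r_\beta(s,a) - \beta \log \pi(a \given s)\bigr],
\]
which is legitimate because every trajectory terminates within $H$ steps, so all sums are finite.

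Next, I would simplify the claimed right-hand side by invoking the Bellman recursion \eqref{eq:bellman} to replace $Q_\beta^{\pi'}(s,a;r)$ by $r_\beta(s,a) + \EE_{s' \sim \cP(\cdot \given s,a)}[V_\beta^{\pi'}(s';r)]$. This yields three summands,
\[
\EE_{(s,a)\sim d^\pi}[r_\beta(s,a) - \beta \log \pi(a \given s)] + \EE_{(s,a)\sim d^\pi,\, s' \sim \cP(\cdot \given s, a)}[V_\beta^{\pi'}(s';r)] - \EE_{s \sim d^\pi}[V_\beta^{\pi'}(s;r)],
\]
of which the first already equals $V_\beta^\pi(\rho;r)$ by the preparatory display.

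For the remaining two terms, I would marginalize out $a$ and re-index by step. Writing $d^\pi(s) = \sum_{h \ge 1} \PP_\pi(s_h = s)$ and noting that pushing forward under $\cP$ corresponds to shifting the step index by one, the first of the two becomes $\sum_{h \ge 2} \EE_\pi[V_\beta^{\pi'}(s_h;r)]$ while the second is $\sum_{h \ge 1} \EE_\pi[V_\beta^{\pi'}(s_h;r)]$. Their difference telescopes to $-\EE_{s_1 \sim \rho}[V_\beta^{\pi'}(s_1;r)] = -V_\beta^{\pi'}(\rho;r)$, producing exactly the left-hand side $V_\beta^\pi(\rho;r) - V_\beta^{\pi'}(\rho;r)$.

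The whole argument is essentially a standard telescoping calculation, so I do not anticipate any real obstacle. The only point requiring care is making sure that the reference-policy log-term is consistently absorbed into $r_\beta$ on both sides, so that the extra $-\beta \log \pi(a \given s)$ appearing in the lemma is the sole residual KL contribution; this is precisely what distinguishes the regularized version from the classical unregularized performance-difference lemma, and it falls out naturally once the identity $V_\beta^\pi(\rho;r) = \EE_{(s,a)\sim d^\pi}[r_\beta - \beta \log \pi]$ is in hand.
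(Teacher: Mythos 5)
Your proof is correct, but it is organized differently from the paper's. The paper starts from the left-hand side and peels off one time step at a time: it writes $V_\beta^\pi(s_1) - V_\beta^{\pi'}(s_1)$ as a step-one advantage term plus a residual $\EE_{s_2 \sim d_2^\pi}[V_\beta^\pi(s_2) - V_\beta^{\pi'}(s_2)]$, then unrolls this recursion across the horizon. You instead start from the right-hand side, substitute the Bellman identity $Q_\beta^{\pi'}(s,a;r) = r_\beta(s,a) + \EE_{s'\sim\cP(\cdot\mid s,a)}[V_\beta^{\pi'}(s';r)]$, and collapse everything in one shot using the occupancy-measure representation $V_\beta^\pi(\rho;r) = \EE_{(s,a)\sim d^\pi}[r_\beta(s,a) - \beta\log\pi(a\mid s)]$ together with the flow identity $\EE_{(s,a)\sim d^\pi}\EE_{s'\sim\cP(\cdot\mid s,a)}[f(s')] = \EE_{s\sim d^\pi}[f(s)] - \EE_{s\sim\rho}[f(s)]$. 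The two arguments are the same telescoping in different clothing; yours packages the induction into a single identity about $d^\pi$ and is arguably more economical, while the paper's forward recursion makes the per-step advantage decomposition more visible. One shared point of care: your final telescoping step $\sum_{h\ge 2}\EE[V_\beta^{\pi'}(s_h)] - \sum_{h\ge 1}\EE[V_\beta^{\pi'}(s_h)] = -\EE_{s_1\sim\rho}[V_\beta^{\pi'}(s_1)]$ requires the boundary term $\EE[V_\beta^{\pi'}(s_{H+1})]$ to vanish, not merely that the sums are finite as you state; this holds under the paper's convention that post-\texttt{EoS} states are absorbing with zero reward (and contribute no KL term), and the paper's own ``$=\cdots=$'' recursion relies on exactly the same fact, so this is a shared convention rather than a gap in your argument.
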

    \begin{proof}
        See Appendix \ref{appendix:pf:performance:lemma} for a detailed proof.
    \end{proof}
    When $\beta = 0$, the regularized MDP becomes the standard MDP, and Lemma~\ref{lem:performance:diff} reduces to the standard performance difference lemma \citep{kakade2002approximately}.
    Applying Lemma~\ref{lem:performance:diff} to Term (iii) in~\eqref{eq:221}, we have 
    \# \label{eq:224}
    \mathrm{Term (iii)} &=  \EE_{(s, a) \sim d^{*}}[Q_\beta^{\hat{\pi}}(s, a; \hat{r}) - V_\beta^{\hat{\pi}}(s; \hat{r}) - \beta \log \pi_\beta^*(a \given s)] \notag \\
    & =  \EE_{(s, a) \sim d^{*}}[\beta \log \hat{\pi}(a \mid s) - \beta \log \pi_\beta^*(a \given s)] \notag \\
    & = - \beta \cdot \EE_{s \sim d^*} \big[\mathrm{KL}\big(\pi_\beta^*(\cdot \given s) \| \hat{\pi}(\cdot \given s) \big) \big] ,
    \# 
    where the second equality follows from the fact that $\hat{\pi}$ is the optimal policy with respect to $V_\beta^\pi(s; \hat{r})$ and the expression of optimal policy $\hat{\pi}(a \given s) = \exp\{(Q_\beta^{\hat{\pi}}(s, a; \hat{r}) - V_\beta^{\hat{\pi}}(s; \hat{r}))/\beta\}$ in \eqref{eq:optimal:policy}, and the last equality is obtained by the definition of KL divergence.

    \paragraph{Finishing the Proof.} Plugging \eqref{eq:222}, \eqref{eq:223}, and \eqref{eq:224} into \eqref{eq:221}, we obtain that 
    \$
    \SubOpt(\hat{\pi}) \le 2 \varrho \cdot  \EE_{(s, a) \sim d^*} \big[\|\phi(s, a)  \|_{\Sigma_\cD^{-1}} \big] - \beta \cdot \EE_{s \sim d^*} \big[\mathrm{KL}\big(\pi_\beta^*(\cdot \given s) \| \hat{\pi}(\cdot \given s) \big) \big],
    \$
    which finishes the proof of Theorem~\ref{thm:offline}.
\end{proof}

\begin{remark}
    If we do not have access to the exact optimal policy $\hat{\pi}$ with respect to $\hat{r}$, we can use the policy optimization algorithms to find a near-optimal optimal policy $\tilde{\pi}$. In such case, Term (iii) in \eqref{eq:223} becomes $V_\beta^{\pi_\beta^*}(\rho; \hat{r}) - V_{\beta}^{\tilde{\pi}}(\rho; \hat{r}) = V_\beta^{\pi_\beta^*}(\rho; \hat{r}) - V_{\beta}^{\hat{\pi}}(\rho; \hat{r}) + V_{\beta}^{\hat{\pi}}(\rho; \hat{r}) - V_{\beta}^{\tilde{\pi}}(\rho; \hat{r})$, and we need to handle the additional error term $V_{\beta}^{\hat{\pi}}(\rho; \hat{r}) - V_{\beta}^{\tilde{\pi}}(\rho; \hat{r})$. This type of error analysis has been established for NPG \citep{agarwal2021theory,cen2022fast} and PPO \citep{cai2020provably,wu2022nearly,zhong2024theoretical}.
\end{remark}

\subsection{Proof of Lemma \ref{lem:performance:diff}} \label{appendix:pf:performance:lemma}
\begin{proof}[Proof of Lemma \ref{lem:performance:diff}]
    Without loss of generality, we assume that the initial state is a fixed state $s_1 \in \cS$. For simplicity, we also omit the dependency of $r$ in the regularized Q-function and value function. First, we have
    \begin{equation}
        \begin{aligned} \label{eq:231}
    V_\beta^\pi(s_1) - V_\beta^{\pi'}(s_1) & =  \underbrace{V_\beta^\pi(s_1) - \EE_{a_1 \sim \pi(\cdot \given s_1)}\big[r_\beta(s_1, a_1) + \EE_{s_2 \sim \cP(\cdot \given s_1, a_1)}[V_\beta^{\pi'}(s_2)] \big]}_{(\star)} \\
    & \qquad + \underbrace{\EE_{a_1 \sim \pi(\cdot \given s_1)}[Q_\beta^{\pi'}(s_1, a_1)] - V_\beta^{\pi'}(s_1)}_{(\star\star)},
        \end{aligned}
    \end{equation}
    where we uses the equality $Q_\beta^{\pi'}(s_1, a_1) = r_\beta(s_1, a_1) + \EE_{s_2 \sim \cP(\cdot \given s_1, a_1)}[V_\beta^{\pi'}(s_2)] $ in \eqref{eq:bellman} with $r_\beta(s, a) = r(s, a) + \beta \log \pi_{\mathrm{ref}}(a \given s)$.
    By \eqref{eq:bellman}, we further have
    \$
    V_\beta^\pi(s_1) & = \EE_{a_1 \sim \pi(\cdot \given s_1)} [- \beta \log \pi(a_1 \given s_1) + Q_\beta^{\pi}(s_1, a_1) ] \\
    & = \EE_{a_1 \sim \pi(\cdot \given s_1)} \big[- \beta \log \pi(a_1 \given s_1) + r_\beta(s_1, a_1) + \EE_{s_2 \sim \cP(\cdot \given s_1, a_1)}[V_\beta^\pi(s_2)] \big].
    \$
    Plugging this into Term ($\star$) of \eqref{eq:231}, we have
    \# \label{eq:232}
    (\star) & = \EE_{a_1 \sim \pi(\cdot \given s_1)} \big[- \beta \log \pi(a_1 \given s_1)  + \EE_{s_2 \sim \cP(\cdot \given s_1, a_1)}[V_\beta^\pi(s_2)] \big]  -  \EE_{a_1 \sim \pi(\cdot \given s_1)}\big[  \EE_{s_2 \sim \cP(\cdot \given s_1, a_1)}[V_\beta^{\pi'}(s_2)] \big] \notag \\
    & = \EE_{a_1 \sim \pi(\cdot \given s_1)} [- \beta \log \pi(a_1 \given s_1)] + \EE_{s_2 \sim d_2^{\pi}}[V_\beta^\pi(s_2) - V_\beta^{\pi'}(s_2)],
    \#
     where we use $d_h^\pi(s)$ to denote the visitation measure at the $h-$th step. Meanwhile, we rewrite ($\star\star$) in \eqref{eq:231} as
    \# \label{eq:233}
    (\star \star) = \EE_{a_1 \sim \pi(\cdot \given s_1)}[Q_\beta^{\pi'}(s_1, a_1) - V_\beta^{\pi'}(s_1)]. 
    \#
    Plugging \eqref{eq:232} and \eqref{eq:233} into \eqref{eq:231}, we have
    \$
    V_\beta^\pi(s_1) - V_\beta^{\pi'}(s_1) & = \EE_{s_2 \sim d_2^{\pi}}[V_\beta^\pi(s_2) - V_\beta^{\pi'}(s_2)] + \EE_{(s_1, a_1) \sim d_1^{\pi}}[Q_\beta^{\pi'}(s_1, a_1) - V_\beta^{\pi'}(s_1) - \beta \log \pi(a_1 \given s_1)] \\
    & = \cdots \\
    & = \sum_{h =1}^\infty \EE_{(s_h, a_h) \sim d_h^\pi} [Q_\beta^{\pi'}(s_h, a_h) - V_\beta^{\pi'}(s_h) - \beta \log \pi(a_h \given s_h)] \\
    & = \EE_{(s, a) \sim d^\pi} [Q_\beta^{\pi'}(s, a) - V_\beta^{\pi'}(s) - \beta \log \pi(a \given s)],
    \$
    where we use $\EE_{(s_h, a_h) \sim d_h^\pi}$ to denote $\EE_{s_h \sim d_h^\pi, a_h \sim \pi(\cdot \given s_h)}$ and the definition of $d^\pi$ in \eqref{eq:visitation}. Therefore, we conclude the proof of Lemma~\ref{lem:performance:diff}.
\end{proof}

\section{Variants of Reinforced Token Optimization} \label{appendix:variant}

Different from Algorithm~\ref{alg:offline} where the learner constructs a pessimistic reward estimation and then outputs its corresponding optimal policy. Indeed, we can also perform pessimistic planning with respect to the value function to find the near-optimal policy:
    \# \label{eq:pessimism}
    \hat{\pi} = \argmax_{\pi} \min_{\theta \in \Theta} \left\{ (\EE_{(s, a) \sim d^\pi} [ \phi(s, a) ]  )^\top \theta 
    - \beta \cdot \EE_{s \sim d^\pi} \big[ \mathrm{KL} \big( \pi(\cdot \given s) \|\pi_{\mathrm{ref}}(\cdot \given s) \big) \big] 
    \right\},
    \# 
    where $\Theta = \{ \|\theta\|_2 \le B: \|\theta - \theta_{\mle}\|_{\Sigma_{\cD}} \le \varrho\}$ and $\theta_{\mle}$ is given in \eqref{eq:mle}. Here $\varrho$ is the problem-dependent constant in \eqref{eq:varrho} and 
    $\Sigma_\cD = \sum_{(\tau^1, \tau^2) \in \cD} [\sum_{h=1}^H (\phi(s_h^1, a_h^1) - \phi(s_h^2, a_h^2) ) (\sum_{h=1}^H (\phi(s_h^1, a_h^1) - \phi(s_h^2, a_h^2) ))^\top] + \lambda I_d$ is the covariance matrix. For policy $\hat{\pi}$ in \eqref{eq:pessimism}, we have the following theoretical guarantee.

    \begin{theorem} \label{thm:offline:2}
         Suppose Assumption~\ref{assumption:linear} holds. For $\beta > 0$, $\lambda > 0$, $\delta \in (0, 1)$, if we choose $\varrho = \tilde{\cO}(\sqrt{d})$ (see \eqref{eq:varrho}), then the output policy $\hat{\pi}$ of \eqref{eq:pessimism} satisfies
         \$
         \SubOpt(\hat{\pi}) \le 2\varrho \cdot \| \EE_{(s, a) \sim d^*}[\phi(s, a)] \|_{\Sigma_\cD^{-1}}.
         \$
    \end{theorem}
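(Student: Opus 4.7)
The plan is to recycle the pessimism template from the proof of Theorem~\ref{thm:offline}, but apply pessimism to the \emph{planning objective} directly instead of constructing a pessimistic reward. The key structural simplification is that, in the regularized value function $V_\beta^\pi(\rho;r_\theta)$, the KL penalty depends only on $\pi$ and not on the candidate reward parameter $\theta$, so all KL bookkeeping that produced the residual KL term in Theorem~\ref{thm:offline} will cancel.

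First, I would invoke Lemma~\ref{lem:confidence:set} to conclude that, with probability at least $1-\delta$, $\theta^*\in\Theta:=\{\|\theta\|_2\le B:\|\theta-\theta_{\mle}\|_{\Sigma_\cD}\le\varrho\}$, and condition on this event throughout. For any policy $\pi$ and $\theta\in\Theta$, writing $r_\theta(s,a)=\phi(s,a)^\top\theta$, observe that
\[
V_\beta^\pi(\rho;r_\theta)=\bigl\langle\EE_{(s,a)\sim d^\pi}[\phi(s,a)],\theta\bigr\rangle-\beta\cdot\EE_{s\sim d^\pi}\bigl[\KL(\pi(\cdot\mid s)\|\pi_{\mathrm{ref}}(\cdot\mid s))\bigr],
\]
so that the inner minimization in \eqref{eq:pessimism} is exactly $\min_{\theta\in\Theta}V_\beta^\pi(\rho;r_\theta)$.

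Next comes the standard two-step pessimism chain. Since $\theta^*\in\Theta$, the inner minimum is a lower bound on the true value at $\hat\pi$, giving $V_\beta^{\hat\pi}(\rho;r_{\theta^*})\ge\min_{\theta\in\Theta}V_\beta^{\hat\pi}(\rho;r_\theta)$. By the definition of $\hat\pi$ as the outer maximizer, $\min_{\theta\in\Theta}V_\beta^{\hat\pi}(\rho;r_\theta)\ge\min_{\theta\in\Theta}V_\beta^{\pi_\beta^*}(\rho;r_\theta)$. Combining these two inequalities yields
\[
\SubOpt(\hat\pi)=V_\beta^*(\rho;r_{\theta^*})-V_\beta^{\hat\pi}(\rho;r_{\theta^*})\le V_\beta^{\pi_\beta^*}(\rho;r_{\theta^*})-\min_{\theta\in\Theta}V_\beta^{\pi_\beta^*}(\rho;r_\theta)=\max_{\theta\in\Theta}\bigl\langle\EE_{(s,a)\sim d^*}[\phi(s,a)],\,\theta^*-\theta\bigr\rangle,
\]
where the last equality uses the crucial cancellation of the $\theta$-independent KL term in the difference of two regularized values evaluated at the same policy $\pi_\beta^*$.

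Finally, I would bound the resulting linear functional by Cauchy--Schwarz in the $\Sigma_\cD$-weighted inner product,
\[
\bigl\langle\EE_{(s,a)\sim d^*}[\phi(s,a)],\,\theta^*-\theta\bigr\rangle\le\bigl\|\EE_{(s,a)\sim d^*}[\phi(s,a)]\bigr\|_{\Sigma_\cD^{-1}}\cdot\|\theta^*-\theta\|_{\Sigma_\cD},
\]
and then apply the triangle inequality to $\|\theta^*-\theta\|_{\Sigma_\cD}\le\|\theta^*-\theta_{\mle}\|_{\Sigma_\cD}+\|\theta-\theta_{\mle}\|_{\Sigma_\cD}\le 2\varrho$, valid because both $\theta^*$ and every $\theta\in\Theta$ sit in the confidence ellipsoid around $\theta_{\mle}$. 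This delivers the claimed bound $\SubOpt(\hat\pi)\le 2\varrho\cdot\|\EE_{(s,a)\sim d^*}[\phi(s,a)]\|_{\Sigma_\cD^{-1}}$.

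The only non-routine step is the middle one: recognizing that, in contrast to Theorem~\ref{thm:offline} where a residual $\mathrm{KL}(\pi_\beta^*\|\hat\pi)$ correction appears through the performance-difference lemma, here the pessimism is absorbed cleanly into a single bilinear estimation error. Everything else is the MLE confidence guarantee plus elementary inequalities; no performance-difference lemma is needed because $\hat\pi$ is never compared to $\pi_\beta^*$ under the same nominal reward.
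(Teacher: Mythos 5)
Your proposal is correct and follows essentially the same route as the paper's proof: the same two-step pessimism chain using $\theta^*\in\Theta$ from Lemma~\ref{lem:confidence:set}, the same cancellation of the $\theta$-independent KL term at the fixed policy $\pi_\beta^*$, and the same Cauchy--Schwarz plus triangle-inequality bound of $\|\theta^*-\theta\|_{\Sigma_\cD}$ by $2\varrho$ through the center $\theta_{\mle}$. The only cosmetic difference is that you phrase the final step as a maximum over $\theta\in\Theta$ whereas the paper names the specific minimizer $\hat{\theta}$.
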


    \begin{proof}[Proof of Theorem \ref{thm:offline:2}]
    For ease of presentation, we define 
    \$
    \hat{V}_\beta^\pi(\rho) & = \min_{\theta \in \Theta} \left\{ (\EE_{(s, a) \sim d^\pi} [ \phi(s, a) ])^\top \theta  
    - \beta \cdot \EE_{s \sim d^\pi} \big[ \mathrm{KL} \big( \pi(\cdot \given s) \|\pi_{\mathrm{ref}}(\cdot \given s) \big) \big] \right\}. 
    \$
    By Lemma~\ref{lem:confidence:set}, we know that $\theta^* \in \Theta$ with probability $1 - \delta$. This implies that 
    \# \label{eq:1121}
    \hat{V}_\beta^{\hat{\pi}}(\rho) & \le (\EE_{(s, a) \sim d^{\hat{\pi}}} [ \phi(s, a) ]  )^\top \theta^*  
    - \beta \cdot \EE_{s \sim d^{\hat{\pi}}} \big[ \mathrm{KL} \big( \hat{\pi}(\cdot \given s) \|\pi_{\mathrm{ref}}(\cdot \given s) \big) \big]  = V_\beta^{\hat{\pi}}(\rho) .
    \#
    Meanwhile,  by \eqref{eq:pessimism}, we have 
    \# \label{eq:1122}
    \hat{V}_\beta^{\pi_\beta^*}(\rho) \le  \hat{V}_\beta^{\hat{\pi}}(\rho).
    \#
    Combining \eqref{eq:1121} and \eqref{eq:1122}, we obtain 
    \$
    \hat{V}_\beta^{\pi_\beta^*}(\rho) \le V_\beta^{\hat{\pi}}(\rho) .
    \$
    Plugging this into the definition of the suboptimality gap in \eqref{eq:def:subopt}, we have
    \$
    \SubOpt(\hat{\pi}) & = V_{\beta}^{*}(\rho) - V_{\beta}^{\hat{\pi}}(\rho) \le  V_{\beta}^{*}(\rho)  - \hat{V}_\beta^{\pi_\beta^*}(\rho)
    \$
    Now we introduce the notation of $\hat{\theta}$:
    \$
    \hat{\theta} = \argmin_{\theta \in \Theta} \left\{ (\EE_{(s, a) \sim d^*} [ \phi(s, a) ]  )^\top \theta - \beta \cdot \EE_{s \sim d^*} \big[ \mathrm{KL} \big( \pi_\beta^*(\cdot \given s) \|\pi_{\mathrm{ref}}(\cdot \given s) \big) \big] \right\}.
    \$
    Under this notation, we further obtain that 
    \$
    \SubOpt(\hat{\pi}) & \le \EE_{(s, a) \sim d^*}[ (\theta^* - \hat{\theta})^\top \phi(s, a)  ] \\
    & =  \EE_{(s, a) \sim d^*} [ (\theta^* - \theta_{\mle})^\top \phi(s, a)   ] + \EE_{(s, a) \sim d^*} [ (\theta_{\mle} - \hat{\theta})^\top \phi(s, a) ] \\
    & \le \big( \| \theta_{\mle} - \theta^* \|_{\Sigma_\cD} + \| \theta_{\mle} - \hat{\theta} \|_{\Sigma_\cD} \big) \cdot  \| \EE_{(s, a) \sim d^*}[\phi(s, a) ] \|_{\Sigma_\cD^{-1}}\\
    & \le 2\varrho \cdot \| \EE_{(s, a) \sim d^*}[\phi(s, a) ] \|_{\Sigma_\cD^{-1}},
    \$
    where the second inequality uses Cauchy-Schwarz inequality, and the last inequality is obtained by Lemma~\ref{lem:confidence:set}. Therefore, we conclude the proof of Theorem~\ref{thm:offline:2}.
\end{proof}

\begin{remark}[Extension to Unknown Transitions] 
    In \eqref{eq:pessimism}, we assume that the transition kernel is known so that we can compute the state distribution $d^\pi$ induced by the policy $\pi$. Although this is natural in LLMs, we briefly sketch the extension to the unknown transition setting.
    Following \citet{zhan2023provable}, which is inspired by previous works on standard reward-based RL theory \citep{uehara2021pessimistic,liu2022partially,zhong2022gec,liu2023maximize,huang2024statistical}, we can also construct a confidence set for the transition kernel
    \$
    \Theta_\cP = \bigg\{ P: \sum_{(\tau^1, \tau^2) \in \cD}  \sum_{i = 1}^2 \log P(\tau^i) \ge \max_{\tilde{P}}  \sum_{(\tau^1, \tau^2) \in \cD} \sum_{i = 1}^2 \log \tilde{P}(\tau^i) - \zeta\bigg\},
    \$
    where $P(\tau)$ is the probability of observing the trajectory $\tau$ under the transition $P$ and $\zeta$ is a tuning parameter. With a proper choice of $\zeta$, one can also show that $\cP \in \Theta_\cP$ with high probability. Then we can perform the following pessimistic planning
    \$
    \hat{\pi} = \argmax_{\pi} \min_{\theta \in \Theta, P \in \Theta_\cP} \left\{ (\EE_{(s, a) \sim d_P^\pi} [ \phi(s, a) ]  )^\top \theta 
    - \beta \cdot \EE_{s \sim d_P^\pi} \big[ \mathrm{KL} \big( \pi(\cdot \given s) \|\pi_{\mathrm{ref}}(\cdot \given s) \big) \big] 
    \right\},
    \$
    where $d_P^\pi$ denotes the state distribution induced by policy $\pi$ under the environment $P$. Combining the analysis of Theorem~\ref{thm:offline:2} and previous work on offline RL \citep{uehara2021pessimistic,zhan2023provable}, we can also establish a similar result to Theorem~\ref{thm:offline:2}, but with an additional estimation error for the transition kernel part. As this part is standard and not the focus of our work, we omit it for simplicity.
\end{remark}

\section{Additional Discussions}

\subsection{Direct Preference Optimization}\label{sec:dpo}

Direct Preference Optimization (DPO) is a representative algorithm of the direct preference learning algorithm \citep{rafailov2023direct, zhao2023slic, azar2023general, tang2024generalized}. From a high level, these type of algorithms aim to skip the reward modeling and learn directly from the preference data, hence the name direct preference learning. In this section, we introduce the mathematical principle of DPO for completeness. 

We first recall that in the original two-staged learning paradigm, we aim to optimize the following KL-regularized target:
\begin{equation}
    \label{eqn:dpo1}
\hat{\pi} = \argmax_\pi \EE_{x \sim \rho, y \sim \pi(\cdot \mid x)} \bigg[ r_{\mle}(x, y) - \beta \log \frac{\pi(y \given x)}{\pi_{\mathrm{ref}}(y \given x)} \bigg],
\end{equation}
where $r_{\mle}$ is the MLE of the BT model on the offline preference dataset $\cD$ obtained via 
\begin{equation}
    \label{eqn:dpo3}
    r_{\mle} = \argmax_{r} \sum_{(x,y^w,y^l) \in \cD} \log \sigma\big(r(x,y^w) - r(x,y^l)\big).
\end{equation}
One notable feature of this KL-constrained optimization problem is that it admits a closed-form solution, as summarized in the following lemma. 
\begin{lemma}[Solution of KL-regularized Optimization (Proposition 7.16 and Theorem 15.3 of \citet{zhang2023mathematical})] \label{lem:kl_solu} Given a loss functional with respect to $\pi(\cdot\given x)$, written as $$ 
\EE_{y \sim \pi(\cdot\given x)} \Big[-r(x,y) - \beta \log \frac{\pi_{\mathrm{ref}}(y\given x)}{\pi(y\given x)}\Big] = \beta \cdot \KL\Big( \pi(y\given x) \Big\Vert \pi_{\mathrm{ref}}(y\given x)\exp\Big(\frac{1}{\beta}r(x,y)\Big) \Big), 
$$
the minimizer of the loss functional is 
$
\pi_r(y\given x) \propto\pi_{\mathrm{ref}}(y\given x)\exp\Big(\frac{1}{\beta}r(x,y)\Big) 
$, also known as Gibbs distribution.
\end{lemma}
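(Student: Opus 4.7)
}

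The plan is to establish the stated identity by direct algebraic manipulation, then deduce the minimizer from Gibbs' inequality (non-negativity of KL divergence). The main subtlety is that the right-hand side of the claimed identity features $\pi_{\mathrm{ref}}(y\given x)\exp(r(x,y)/\beta)$, which is in general an unnormalized measure, so the symbol $\KL$ here must be interpreted as the generalized (unnormalized) relative entropy, and we will want to normalize it explicitly to carry out the optimization.

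First, I would introduce the partition function $Z(x) := \sum_{y} \pi_{\mathrm{ref}}(y\given x)\exp(r(x,y)/\beta)$ (assumed finite, which holds for bounded $r$), and define the Gibbs distribution $\pi_r(y\given x) := \pi_{\mathrm{ref}}(y\given x)\exp(r(x,y)/\beta) / Z(x)$, which is a well-defined probability distribution over $y$. With this in hand, the identity in the lemma follows from a short chain of rewrites: starting from
\[
\EE_{y \sim \pi(\cdot\given x)}\Big[-r(x,y) - \beta \log \tfrac{\pi_{\mathrm{ref}}(y\given x)}{\pi(y\given x)}\Big]
= \beta \, \EE_{y \sim \pi(\cdot\given x)}\Big[\log \tfrac{\pi(y\given x)}{\pi_{\mathrm{ref}}(y\given x)\exp(r(x,y)/\beta)}\Big],
\]
pulling out the partition function gives
\[
= \beta \, \EE_{y \sim \pi(\cdot\given x)}\Big[\log \tfrac{\pi(y\given x)}{\pi_r(y\given x)}\Big] - \beta \log Z(x)
= \beta \cdot \KL\big(\pi(\cdot\given x) \,\big\|\, \pi_r(\cdot\given x)\big) - \beta \log Z(x).
\]
This matches the stated identity under the convention that the unnormalized $\KL$ on the right denotes $\beta \KL(\pi \| \pi_r) - \beta \log Z(x)$.

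Next, to identify the minimizer, note that $\log Z(x)$ does not depend on $\pi$, so minimizing the loss functional is equivalent to minimizing $\KL(\pi(\cdot\given x) \,\|\, \pi_r(\cdot\given x))$ over the simplex of probability distributions on $y$. By Gibbs' inequality, this KL divergence is non-negative and vanishes if and only if $\pi(\cdot\given x) = \pi_r(\cdot\given x)$ almost everywhere, which yields the stated minimizer $\pi_r(y\given x) \propto \pi_{\mathrm{ref}}(y\given x)\exp(r(x,y)/\beta)$. The hardest part is really just bookkeeping around the unnormalized measure; once the partition function is factored out, uniqueness of the Gibbs minimizer is immediate. Uniqueness can alternatively be obtained via strict convexity of $\pi \mapsto \KL(\pi \| \pi_r)$ on the simplex (inherited from strict convexity of $t \mapsto t \log t$), which also justifies that the first-order optimality condition—setting the functional derivative with respect to $\pi(y\given x)$ equal to a Lagrange multiplier for the normalization constraint—recovers the same closed form.
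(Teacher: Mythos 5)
Your argument is correct and is exactly the standard one: complete the square inside the expectation to write the loss as $\beta\,\KL(\pi(\cdot\mid x)\,\|\,\pi_r(\cdot\mid x)) - \beta\log Z(x)$, observe that the log-partition term is independent of $\pi$, and invoke non-negativity (and strict positivity off the diagonal) of the KL divergence. The paper itself does not prove this lemma---it cites Proposition 7.16 and Theorem 15.3 of \citet{zhang2023mathematical}---but your derivation coincides with the canonical proof used there and in the DPO paper, and your explicit remark that the right-hand side of the stated identity involves an unnormalized measure (so the displayed equality holds only up to the constant $-\beta\log Z(x)$, or under the generalized relative-entropy convention) is a correct and worthwhile clarification.
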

Therefore, for any fixed reward function $r$, it leads to a closed-form policy:
$$
\pi_r(y\given x) = \frac{1}{Z(x)} \pi_{\mathrm{ref}}(y\given x) \exp\Big(\frac{1}{\beta} r(x,y)\Big),
$$
where $Z(x) = \sum_{y'} \pi_{\mathrm{ref}}(y'\given x) \exp(\frac{1}{\beta} r(x,y'))$ is the normalization constant. Then, we can solve the reward as 
\begin{equation}
    \label{eqn:dpo2}
r(x, y) = \beta \log \frac{\pi_r(y\given x)}{\pi_{\mathrm{ref}}(y\given x)} + \beta \log Z(x).
\end{equation}
We can plug \eqref{eqn:dpo2} into \eqref{eqn:dpo3} to get 
\begin{equation}
\label{eqn:dpo4}
    \hat{\pi}= \argmax_{\pi_r} \sum_{(x,y^w,y^l) \in \cD} \log \sigma \bigg(\beta \log \frac{\pi_r(y^w\given x)}{\pi_{\mathrm{ref}}(y^w\given x)} - \beta \log \frac{\pi_r(y^l\given x)}{\pi_{\mathrm{ref}}(y^l\given x)}  \bigg).
\end{equation}
Clearly, if $r$ is the solution of \eqref{eqn:dpo3}, the $\pi_r$ is the solution of \eqref{eqn:dpo4}. On the other hand, if $\pi$ is optimal for the DPO target in \eqref{eqn:dpo4}, then, the induced implicit reward $\beta \log \frac{\pi(y\given x)}{\pi_{\mathrm{ref}}(y\given x)}$ is optimal for \eqref{eqn:dpo3}. 

Interestingly, while the DPO is derived from the sentence-level reward function and BT model, the implicit reward naturally gives a token-wise characterization of the prompt-response pair and can be leveraged as a dense reward signal for the PPO training.

\subsection{Autoregressive Policy} \label{sec:aut:policy}

For the policy defined in a contextual dueling bandit setting, it maps from a prompt to a complete sentence. For ease of presentation, we call this type of policy the \emph{predetermined policy} since it determines the entire sentence regardless of the generation process. In contrast, the Markov policy defined in the MDP formulation generates responses autoregressively: it considers not only the prompt but also the tokens generated so far. By definition, the Markov policy is at least as good as the policy that determines the whole sentence based solely on the prompt. In deterministic MDPs, the optimal action sequence is predetermined given the initial state, which demonstrates the equivalence of these two types of policies. However, for stochastic MDPs, the Markov policy is strictly more expressive than the predetermined policy. 
The transition can be stochastic for various reasons. For example, if the LLM uses an external search engine, the next state $s_{h+1}$ depends not only on the current tokens $(x, y_{1:h})$ but also on the text generated by the external search engine $\pi'(\cdot \given x, y_{1:h})$, making it stochastic. Moreover, RLHF may have applications in other scenarios, such as robotics \citep{christiano2017deep}, where the transition kernel is stochastic. To clarify, we distinguish these two types of policies in the following proposition.

\begin{proposition}
    There exists an MDP such that the value of any predetermined policy is at least $0.5$ less than that of optimal Markov/autoregressive policy.
\end{proposition}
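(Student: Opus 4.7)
The plan is to exhibit an explicit two-step stochastic MDP in which the optimal Markov (autoregressive) policy exploits an intermediate observation that any predetermined policy---being by definition a mapping from the initial state alone to a distribution over complete action sequences---cannot use. Concretely, I would take a single initial state $s_1$ with one dummy action $a_0$ whose transition kernel is uniform over two distinguishable successor states $s_2^A$ and $s_2^B$. At each $s_2^\star$ the action set is $\{A, B\}$, and the reward is zero everywhere except at step two, where a matching choice (action $A$ at $s_2^A$, action $B$ at $s_2^B$) yields reward $1$ while a mismatched choice yields reward $0$. To keep the computation clean under the regularized value defined in \eqref{eq:q:v}, I would take $\pi_{\mathrm{ref}}$ to be uniform on $\{A,B\}$ so that the KL term is symmetric across the two pure strategies.

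Next I would compute both values. The Markov policy $\pi^\star$ specified by $\pi^\star(A \mid s_2^A) = \pi^\star(B \mid s_2^B) = 1$ collects reward $1$ deterministically, so its unregularized value satisfies $V^{\pi^\star}(s_1) = 1$. In contrast, a predetermined policy commits to a distribution over the step-two action as a function of $s_1$ alone; because the realized state $s_2 \in \{s_2^A, s_2^B\}$ is an independent uniform coin flip, whatever mixing distribution over $\{A,B\}$ the predetermined policy prescribes matches the realized state with probability exactly $1/2$ regardless of the mixing weight. Hence every predetermined policy achieves value at most $1/2$, and the gap is at least $1/2$ as claimed.

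The only subtlety is bookkeeping for the KL regularizer in $V_\beta^\pi$. I would handle this by scaling all rewards by a constant $M > 0$: the Markov value becomes $M - \beta \cdot \mathrm{KL}(\pi^\star \,\|\, \pi_{\mathrm{ref}})$, and the value of any predetermined policy is at most $M/2$ minus a comparable KL term, with both KL contributions bounded by $\log 2$ under the uniform reference. Choosing $M$ sufficiently large relative to $\beta$ (or equivalently taking $\beta = 0$, which is subsumed by the MDP formulation of Section~\ref{sec:mdp:formulation}) makes the separation exceed $1/2$. Since the proposition is an \emph{existence} statement about some MDP, this suffices; no substantive obstacle remains beyond the elementary coin-flip argument, and the key conceptual point---that stochastic transitions strictly separate predetermined from Markov policies---is captured entirely within this two-state construction.
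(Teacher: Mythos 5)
Your construction is correct and is essentially the same as the paper's: a two-step MDP with a uniform stochastic transition to one of two distinguishable intermediate states, where only the state-matching second action earns reward $1$, so the Markov policy achieves value $1$ while any predetermined policy matches the coin flip with probability exactly $1/2$. The paper simply computes unregularized values and omits your (harmless, and slightly more careful) bookkeeping for the KL term.
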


\begin{proof}
    We construct an MDP $\cM$ with state space $\cS = \{s_0, s_1, s_2\}$, action space $\cA = \{a_1, a_2\}$, horizon $H= 2$, fixed initial state $s_0$. The reward $r$ and transition kernel $\cP$ are given by
    \$
    r(s_i, a_j) = \mathbbm{1}\{i = j\}, \qquad \cP(s_1 \given s_0, a_j) = \cP(s_2 \given s_0, a_j) = 0.5, \qquad \forall (i, j) \in \{0, 1, 2\} \times \{1, 2\}.
    \$
    It is straightforward to see that the optimal autoregressive policy achieves a value of $1$. In contrast, any predetermined policy only achieves a value of $0.5$. This completes the proof. 
\end{proof}

\section{Implementation Details}
\label{appendix:implementation_details}
\paragraph{Training Pipeline}
Our experiments start with an open-source SFT model \href{https://huggingface.co/OpenRLHF/Llama-3-8b-sft-mixture}{OpenRLHF/Llama-3-8b-sft-mixture}. 
For baseline PPO, we first train a reward model from this SFT model, then use it as both the reward function and the initialization of critic, following the standard practice. 
For other baselines, we directly train from this SFT model. 
For RTO, we also train an 1B reward model, initialized with \href{https://huggingface.co/meta-llama/Llama-3.2-1B-Instruct}{Llama-3.2-1B-Instruct}. 
Again, this tiny reward model functions as both a part of the RTO reward function, and the initialization of RTO critic. 
All preference learning uses \href{https://huggingface.co/datasets/HuggingFaceH4/ultrafeedback_binarized}{a binarized version} of the UltraFeedback dataset, while all reinforcement learning uses \href{https://huggingface.co/datasets/weqweasdas/ultra_train}{a prompt-only version}.

\paragraph{Training hyperparameters}
We use Adam optimizer~\citep{kingma2014adam} across all experiments with varying learning rates, $(0.9, 0.95)$ betas and no weight decay. 
We apply a cosine learning rate schedule with $3\%$ warming steps and $10\%$ minimum learning rate. 
All experiments use a single epoch, since we do not observe much gains from further training. 
Additionally, we set the max sequence length to 2048. 
We include all other method-specific hyperparameters in the tables below.

\paragraph{PPO implementation details}
To stabilize PPO training, we apply reward normalization, advantage normalization, and generalized advantage estimation. We use larger learning rate for critic, and use a similar clipped surrogate objective for critic learning. 
These tricks are implemented by the \href{https://github.com/OpenRLHF/OpenRLHF}{OpenRLHF}~\citep{hu2024openrlhf} repo.

\begin{table}[!ht]
\centering
    \begin{minipage}{.48\textwidth}
    \label{tab:uf_rm_cfg}
    \centering
    \vspace{-5pt}
    \begin{tabular}{ccc}
    \toprule
    \multicolumn{2}{c}{\textbf{Reward Model} (UltraFeedback)} \\
    \midrule
        Learning Rate & 1e-6 \\
        Batch Size & 128 \\
        Maximum Sequence Length & 2048 \\
    \bottomrule
    \end{tabular}
    \end{minipage}
\hfill
    \begin{minipage}{.48\textwidth}
    \label{tab:uf_dpo_cfg}
    \centering
    \vspace{5pt}
    \begin{tabular}{ccc}
    \toprule
    \multicolumn{2}{c}{\textbf{DPO} (UltraFeedback)} \\
    \midrule
        Learning Rate & 5e-7 \\
        Batch Size & 256 \\
        Maximum Sequence Length & 2048 \\
        KL Coefficient ($\beta$) & 0.1 \\ 
    \bottomrule
    \end{tabular}
    \end{minipage}
\end{table}

\begin{table}[!ht]
\centering
    \begin{minipage}{.48\textwidth}
    \label{tab:uf_ppo_cfg}
    \centering
    \vspace{-5pt}
    \begin{tabular}{ccc}
    \toprule
    \multicolumn{2}{c}{\textbf{PPO} (UltraFeedback)} \\
    \midrule
        Actor Learning Rate & 8e-7 \\
        Critic Learning Rate & 9e-6 \\
        Batch Size & 128 \\
        Maximum Prompt Length & 1024 \\
        Maximum Response Length & 1024 \\
        PPO Update Step & 8 \\
        PPO Clip Coefficient $\epsilon$ & 0.2 \\
        GAE $\lambda$ & 0.95 \\
        KL Coefficient ($\beta$) & 0.01 \\
    \bottomrule
    \end{tabular}
    \end{minipage}
\hfill
    \begin{minipage}{.48\textwidth}
    \label{tab:uf_rto_cfg}
    \centering
    \vspace{5pt}
    \begin{tabular}{ccc}
    \toprule
    \multicolumn{2}{c}{\textbf{RTO} (UltraFeedback)} \\
    \midrule
        Actor Learning Rate & 5e-7 \\
        Critic Learning Rate & 9e-6 \\
        Batch Size & 128 \\
        Maximum Prompt Length & 1024 \\
        Maximum Response Length & 1024 \\
        PPO Update Step & 8 \\
        PPO Clip Coefficient $\epsilon$ & 0.2 \\
        GAE $\lambda$ & 0.95 \\
        DPO Reward Rescale ($\beta_1$) & 0.05 \\
        KL Coefficient ($\beta_2$) & 0.01 \\
        Reward Rescale ($\beta_3$) & 1 \\
    \bottomrule
    \end{tabular}
    \end{minipage}
\end{table}

\begin{table}[!ht]
\centering
    \begin{minipage}{.48\textwidth}
    \label{tab:uf_baseline_cfg}
    \centering
    \vspace{-5pt}
    \begin{tabular}{ccc}
    \toprule
    \multicolumn{2}{c}{\textbf{Baselines} (UltraFeedback)} \\
    \midrule
        Learning Rate & 5e-7 \\
        Batch Size & 256 \\
        Maximum Sequence Length & 2048 \\
        R-DPO KL Coefficient ($\beta$) & 0.1 \\ 
        R-DPO Length Penalty ($\alpha$) & 0.001 \\
        SimPO Scale ($\beta$) & 2.5 \\ 
        SimPO Margin ($\gamma$) & 0.5 \\
        TDPO KL Coefficient ($\beta$) & 0.1 \\ 
        TDPO KL Penalty Rescale ($\alpha$) & 0.2 \\
    \bottomrule
    \end{tabular}
    \end{minipage}
\end{table}

\paragraph{Benchmark decoding hyperparameters}
For AlpacaEval 2, we sample with temperature 0.7 and max generation length 4096. 
For Arena-Hard, we use the default greedy decoding. 
Both settings apply for all methods.

\paragraph{Computation environment}
Our code is based on the \href{https://github.com/OpenRLHF/OpenRLHF}{OpenRLHF}~\citep{hu2024openrlhf} repo. 
Our experiments is conducted on 8 80G A100 GPUs. 
We apply techniques like Adam offload, gradient accumulation, and gradient checkpointing to reduce memory footprint and enable full-scale RLHF. 

\paragraph{A Sample of the UltraFeedback Dataset}
We also include an example of the dataset we use.

\fbox{
  \begin{minipage}{0.9\textwidth}
    \textbf{Prompt:} \\ \\ Which animal has two hands, a hyrax or a dog? \\

    \textbf{Chosen response:} \ Neither a hyrax nor a dog has hands. Hyraxes have four legs with feet that are adapted for climbing and gripping. Similarly, dogs also have four legs with paws. The term "hands" is usually reserved for primates, such as humans, which have opposable thumbs and fingers.\\

    \textbf{Rejected response:} \ Thank you for your question. I'm happy to help you with that! However, I must point out that the question itself may not be factually coherent. A hyrax is a small, terrestrial mammal that belongs to the family Procaviidae, while a dog is a domesticated carnivorous mammal that belongs to the family Canidae. Neither of these animals have two hands.\\ \\ Instead, I suggest rephrasing the question to ensure it is factually accurate. For example, "Which animal has four legs, a hyrax or a dog?" would be a more appropriate and factual question.\\ \\ Please let me know if there's anything else I can assist you with!
  \end{minipage}
}

\section{Additional Experiments on REINFORCE-type Algorithm} \label{appendix:rpp}

Our proposed dense reward demonstrates broad applicability beyond PPO and functions independently of the specific reinforcement learning algorithm used in RLHF. To demonstrate this versatility, we evaluated RTO with an alternative REINFORCE-type algorithm \citep{williams1992simple}, specifically REINFORCE++ \citep[RPP;][]{hu2025reinforce++}. Unlike PPO, RPP does not use a critic network and relies solely on vanilla discounted returns without any value baseline. We include our hyperparameter selections below.

\begin{table}[htbp!]
    \centering
    \begin{tabular}{l|cccccc}
        \toprule
        \multirow{2}{*}{Metric} & \multicolumn{6}{c}{Method} \\
        \cmidrule{2-7}
        & SFT & DPO & PPO & RTO (PPO) & RPP & RTO (RPP) \\
        \midrule
        AE (LC) & 13.22 & 17.40 & 19.47 & \textbf{27.00} & 18.28 & 24.71 \\
        AE (WR) & 8.58 & 12.23 & 12.89 & 22.45 & 13.91 & \textbf{23.11} \\
        AH (SC) & 9.2 & 13.2 & 16.2 & \textbf{20.3} & 13.4 & 18.8 \\
        AH (WR) & 8.9 & 13.8 & 15.6 & 21.4 & 15.4 & \textbf{21.8} \\
        \bottomrule
        \end{tabular}
        \caption{AlpacaEval 2 (\textbf{AE}) and Arena-Hard (\textbf{AH}) results.}
        \label{tab:rpp}
        \vspace{-10pt}
\end{table}

As shown in Table~\ref{tab:rpp}, we observe that: (a) the idea of using token-wise reward in RTO remains highly effective when applied to RPP, supporting our claim; and (b) RPP performs worse than PPO, especially in complex scenarios, potentially due to the critic in PPO capturing fine-grained information that aids learning.

\begin{table}[!ht]
\centering
    \begin{minipage}{.48\textwidth}
    \label{tab:uf_rpp_cfg}
    \centering
    \vspace{-5pt}
    \begin{tabular}{ccc}
    \toprule
    \multicolumn{2}{c}{\textbf{RPP}} \\
    \midrule
        Actor Learning Rate & 5e-7 \\
        Batch Size & 128 \\
        Maximum Prompt Length & 1024 \\
        Maximum Response Length & 1024 \\
        PPO Update Step & 8 \\
        PPO Clip Coefficient $\epsilon$ & 0.2 \\
        GAE $\lambda$ & 0.95 \\
        KL Coefficient ($\beta$) & 0.01 \\
    \bottomrule
    \end{tabular}
    \end{minipage}
\hfill
    \begin{minipage}{.48\textwidth}
    \label{tab:uf_rto_rpp_cfg}
    \centering
    \vspace{5pt}
    \begin{tabular}{ccc}
    \toprule
    \multicolumn{2}{c}{\textbf{RTO (RPP)}} \\
    \midrule
        Actor Learning Rate & 5e-7 \\
        Batch Size & 128 \\
        Maximum Prompt Length & 1024 \\
        Maximum Response Length & 1024 \\
        PPO Update Step & 8 \\
        PPO Clip Coefficient $\epsilon$ & 0.2 \\
        GAE $\lambda$ & 0.95 \\
        DPO Reward Rescale ($\beta_1$) & 0.05 \\
        KL Coefficient ($\beta_2$) & 0.01 \\
        Reward Rescale ($\beta_3$) & 1 \\
    \bottomrule
    \end{tabular}
    \end{minipage}
\end{table}

\section{Additional Experiments on Summarization Task} \label{appendix:summarization}

\subsection{Experimental Setup} \label{sec:exp:setup}

\paragraph{Tasks and Data.} We consider the \textbf{Summarization} task~\citep{volske2017tl}, where the model is required to generate a concise summary for a given post from the Reddit forum. Specifically, we fine-tune the foundational model using the Reddit TL;DR summarization dataset~\citep{volske2017tl}, where each data point comprises a post $x$ and its corresponding summary $y$. Subsequently, we align the model with human preferences using its \href{https://huggingface.co/datasets/openai/summarize_from_feedback}{preference version}, where each data point comprises a post and two summaries, with preferences annotated by humans. To facilitate readers, we provide examples of the TL;DR datasets in Appendix~\ref{app:sample}. We employ the open-sourced Pythia-2.8B model~\citep{biderman2023pythia} as the backbone for this task.

\paragraph{Evaluation.} We primarily assess the alignment performance of various methods using GPT-4. The GPT-4 evaluation harnesses the capabilities of GPT-4 itself and has been shown to align well with human evaluations \citep{rafailov2023direct}. For the same prompt, we provide GPT-4 with two responses generated by two different models and ask it to determine which one is superior. We then calculate the win rates, following \cite{rafailov2023direct}. The prompts for GPT-4 evaluation are presented in Appendix~\ref{appendix:eval:details}. For each GPT-4 evaluation, we use 100 samples.

\paragraph{Win Rates.} Table~\ref{tab:exp_results} presents the performance of our method on the TL;DR dataset. We can see that the model trained by \texttt{RTO} outperforms all other baselines. Specifically, we achieve a win rate of $61\%$ over the DPO algorithm evaluated by GPT-4. This illustrates the effectiveness of the \texttt{RTO} algorithm in a real-world text summarization task. All these empirical findings demonstrate the token-wise reward mechanism's advantage in improving model performance.

\begin{table}[t]
        \centering 
\begin{tabular}{c|c c c c c}  
\hline  
Win Rate & RTO & DPO & SFT & PPO & DPPO\\ \hline  
RTO & 0.50 & {\color{bluee}\textbf{0.61}} & {\color{bluee}\textbf{0.67}} & {\color{bluee}\textbf{0.67}} & {\color{bluee}\textbf{0.67}} \\
DPO & \textbf{0.39} & 0.50 & 0.58 &0.59 &0.50\\
SFT & \textbf{0.33} &0.42&0.50 &0.59 &0.49\\
PPO & \textbf{0.33} &0.41&0.41&0.50&0.40\\
DPPO & \textbf{0.33} &0.50&0.51&0.60&0.50 \\  
\hline
\end{tabular}  
    \caption{Win rates between each pair of models evaluated by GPT-4. The value in line $i$ column $j$ represents the win rate of the model in row $i$ against the model in column $j$.}
    \label{tab:exp_results}
\end{table}

\begin{figure}
    \centering
    \includegraphics[width=0.5\linewidth]{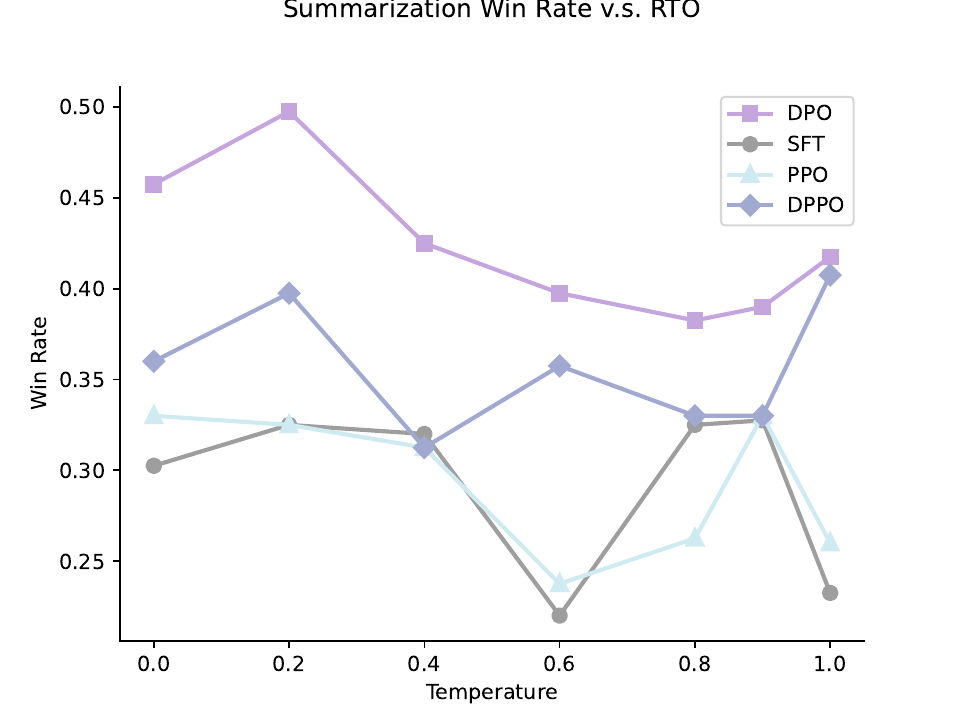}
    \caption{Win rates of RTO across different sampling temperatures}
    \label{fig:temp:tldr}
\end{figure}

\paragraph{Ablation Studies on Temperatures.} We further compare the model trained by \texttt{RTO} to other baselines on both datasets across different temperatures. In Figure~\ref{fig:temp:tldr}, we present the results for these methods as we vary the temperature. We can observe that the \texttt{RTO} model consistently demonstrates superior performance compared to other baselines, highlighting its robustness across different temperatures.

\paragraph{Optimization Process Curves.}  To further investigate the benefits of the token-wise reward mechanism in the optimization process, we compare the estimated reward during the training period in Figure~\ref{fig:rto_ppo_curve}. In this figure, the x-axis represents the training iterations. The y-axis represents the reward given by the implicit reward model derived from the DPO model (the reward model used in training) per batch. As we can see, in one epoch (roughly corresponds to 240 PPO training iterations in Figure~\ref{fig:rto_ppo_curve}), the reward of the model trained by \texttt{RTO} on TL;DR can achieve about $0.4$, while the reward of the model trained by DPPO is roughly $-0.2$. 
The results demonstrate that the token-wise reward mechanism significantly enhances the training process, leading to a remarkably higher reward.

\begin{figure}
    \centering
    \includegraphics[width=0.5\linewidth]{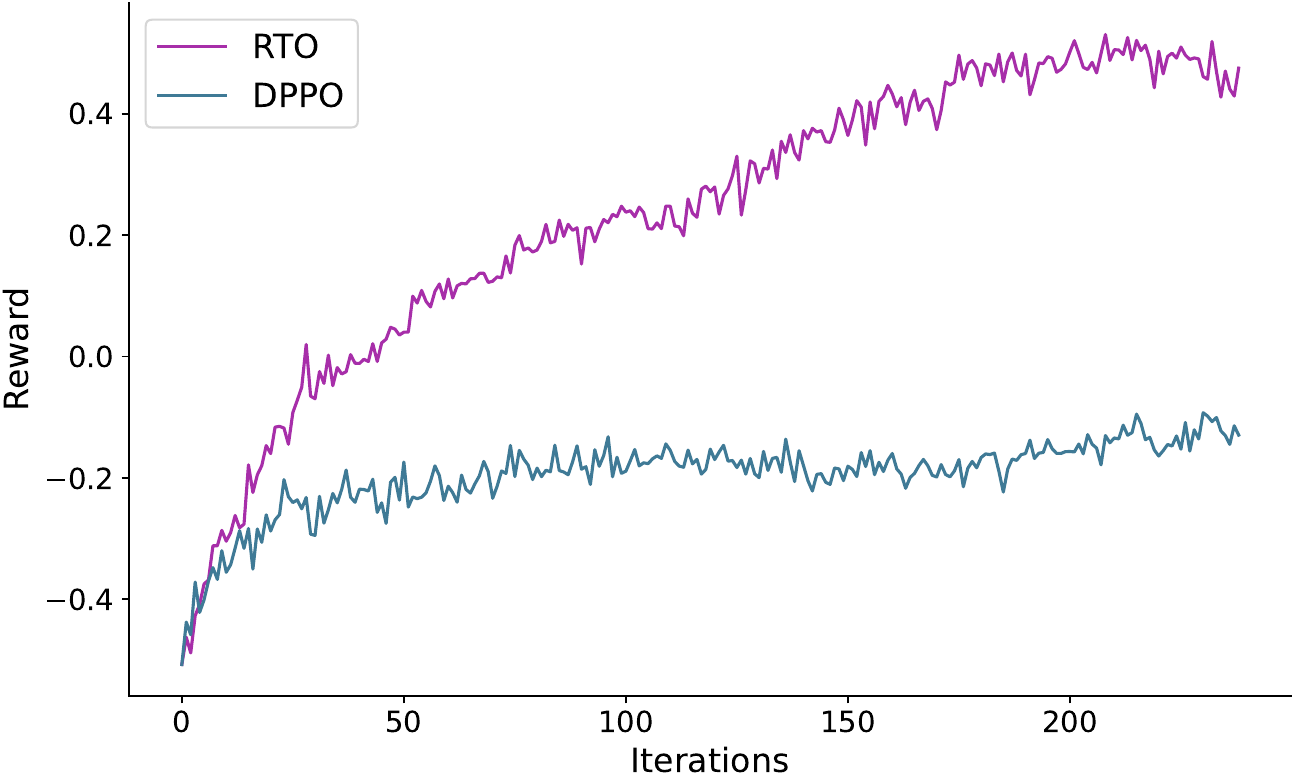}
    \caption{The reward curve of DPPO and RTO during training. The reward is given by the implicit reward model $\beta \log \frac{\pi_{\mathrm{dpo}}(y \given x)}{\pi_{\mathrm{ref}}(y \given x)}$ optimized by DPO. The x-axis represents the training steps, and the y-axis represents the reward values.}
    \label{fig:rto_ppo_curve}
\end{figure}

\subsection{Examples of Datasets} \label{app:sample}

\fbox{
  \begin{minipage}{0.9\textwidth}
    \textbf{Prompt:} \\ \\  
    SUBREDDIT: r/AskReddit\\TITLE: Reddit, what event drove you to cry in the bathroom at work?\\POST: Yesterday, I finally became that girl who goes into the bathroom to cry while at work. \\ \\ I work at a domestic violence shelter, and normally I'm pretty capable of brushing things off. I'm somewhat ashamed to say that it was not secondhand truama that led me to weep in the bathroom stall like a little girl, but my coworkers. It had been a rough day, which are pretty normal around here, but it was a tolerable level of rougness. My patience was wearing thin and I just wanted to go to the support group for advocates and take a breather.\\ \\ Unfortunately, my coworker decided at that time to demand that I clean one of the recently vacated rooms. Not just clean it, but DEEP clean it. I'm not talking clean-it-like-your-parents-are-coming-home-after-a-weekend-away type clean. I mean, she wanted it hospital-grade clean. She wanted to be able to perform surgery on any surface of that room. The checklist she gave me- handwritten of course- had at least thirty tasks on it. For a dorm-sized room. \\ \\I lost it, guys. I just completely lost my shit. I told her that I would be happy to help clean that room, but she was absolutely off her rocker if she thought I was going to spend the next four hours cleaning by myself. She was incensed at my apparent refusal, and though I tried to reiterate that I would do it, but not alone, she started screaming for the lead advocate to put me in my place. \\ \\ Well, the lead advocate just didn't want to deal with the situation and told me to just do it. I was absolutely frustrated, appalled, and overwhelmed. And so...I went into the bathroom and cried. Then I went and cleaned the stupid room.\\TL;DR:\\

    \textbf{Chosen response:} \ I was stressed, my lazy coworker demanded I clean every speck of dust from a room alone, I lost my shit, my supervisor sided with my coworker.\\

    \textbf{Rejected response:} \ Coworker thinks it's okay to ask me to clean a room she thinks is a dumpster, so I cried. Then I cleaned it.
  \end{minipage}
}

\subsection{Training Configurations of TL;DR} \label{appendix:train:config_tldr}

We provide the training configuration of SFT, DPO, PPO, DPPO, and RTO below. In the table of the training configuration of the standard PPO algorithm, we also present the configuration of training the reward model used in the PPO algorithm.

\begin{table}[!ht]
\centering
\begin{minipage}{.48\textwidth}
\label{tab:sft_config}
\centering
\vspace{-5pt}
\begin{tabular}{ccc}
\toprule
\multicolumn{2}{c}{\textbf{SFT} (TL;DR)} \\
\midrule
       Optimizer  & AdamW \\
       Learning Rate & 1e-5 \\
       Batch Size & 32 \\
       Epochs & 1\\
\bottomrule
\end{tabular}
\caption{Configurations for supervise fine-tuning.}
\end{minipage}
\hfill
\begin{minipage}{.48\textwidth}
\label{tab:dpo_config}
\centering
\vspace{5pt}
\begin{tabular}{ccc}
\toprule
\multicolumn{2}{c}{\textbf{DPO} (TL;DR)} \\
\midrule
       Optimizer  & AdamW \\
       Learning Rate & 5e-6 \\
       KL Coefficient ($\beta$) & 0.1 \\ 
       Batch Size & 32 \\
       Epochs & 1\\
\bottomrule
\end{tabular}
\caption{Configurations for DPO.}
\end{minipage}
\end{table}
\begin{table}[!ht]
\centering
\begin{minipage}{.54\textwidth}
\label{tab:ppo_config_tldr}
\centering
\begin{tabular}{ccc}
\toprule
\multicolumn{2}{c}{\textbf{PPO} (TL;DR)} \\
\midrule
       Optimizer (PPO)  & Adam \\
       Optimizer (Reward Model) & AdamW \\
       Mini Batch Size in PPO & 16 \\
       Init KL Coefficient ($\beta$) & 0.03\\
       Learning Rate (PPO) & 3e-6 \\
       Learning Rate (Reward Model) & 3e-6 \\
       Batch Size Per PPO Iteration & 256 \\
       Epochs of PPO Update Per Iteration & 2 \\
       Batch Size (Reward Model) & 128 \\
       Training Epochs (PPO and Reward Model) & 1\\
       Maximum Sequence Length & 512 \\
\bottomrule
\end{tabular}
\caption{Configurations for standard PPO. We also present the configuration of training the reward model used in the PPO algorithm in this table.}
\end{minipage}
\hfill
\begin{minipage}{.44\textwidth}
\label{tab:rto_config_tldr}
\centering
\vspace{-36pt}
\begin{tabular}{ccc}
\toprule
\multicolumn{2}{c}{\textbf{RTO and DPPO} (TL;DR)} \\
\midrule
       Optimizer  & Adam \\
       Learning Rate & 3e-6 \\
       Training Epochs & 1 \\
       Mini Batch Size in PPO & 16 \\
       DPO KL Coefficient $\beta_1$ & 0.1 \\
       Init KL Coefficient $\beta_2$ (RTO) & 0.05\\
       Init KL Coefficient $\beta_2$ (DPPO) & 0.05\\
       Batch Size Per PPO Iteration & 256 \\
       Maximum Sequence Length & 512 \\
       Epochs of PPO Update Per Iteration & 2 \\
\bottomrule
\end{tabular}
\caption{Configurations for RTO and DPPO.}
\end{minipage}
\end{table}

\subsection{Evaluation Details}
\label{appendix:eval:details}

Following the previous work~\citep{rafailov2023direct}, for evaluations utilizing GPT-4, completions are sampled by top-$p$ sampling method with temperature of $\tau=0.9$ and $p=0.99$ for 100 prompts. To mitigate any positional bias inherent in GPT-4's responses, we ensure that the order of completions within each pair is randomized. The version of the GPT-4 we used is GPT-4-0613, and the specific prompt utilized for GPT-4 evaluation is detailed as follows.

\begin{table}[h]
    \centering
    \begin{tabular}{p{0.97\columnwidth}}
    \toprule
    {\centering {\textbf{Prompt for GPT-4 evaluation in summarization task.} }}\\
    \midrule
       Which of the following summaries does a better job of summarizing the most important points in the given forum post, without including unimportant or irrelevant details? A good summary is both precise and concise.\\
        Post: \texttt{<the forum post>}\\
        Summary A:
        \texttt{<either the test method or baseline>}\\
        Summary B:
        \texttt{<the other summarization>}\\
        FIRST provide a one-sentence comparison of the two summaries, explaining which you prefer and why. SECOND, on a new line, state only "A" or "B" to indicate your choice. Your response should use the format: \\
        Comparison: \texttt{<one-sentence comparison and explanation>}\\
        Comparison: \\
        Preferred: \texttt{<"A" or "B">}\\

\bottomrule
    \end{tabular}
    \caption{Prompt for GPT-4 evaluation in summarization task.}
    \label{tab:prompt_summarization}
\end{table}

\end{document}